\newcounter{rulecounter}
\newcommand{\resetrule}{ \setcounter{rulecounter}{0}}
\newsavebox{\selvestebox}
\newenvironment{colbox}[1]
  {\newcommand\colboxcolor{#1}%
   \begin{lrbox}{\selvestebox}%
   \begin{minipage}{\dimexpr\columnwidth-2\fboxsep\relax}}
  {\end{minipage}\end{lrbox}%
   \begin{center}
   \colorbox{\colboxcolor}{\usebox{\selvestebox}}
   \end{center}}
\definecolor{orange}{rgb}{1,0.8,0}
\definecolor{gray}{rgb}{.9,0.9,0.9}
\definecolor{darkgray}{rgb}{.3,0.3,0.3}
\definecolor{darkblue}{rgb}{.1,0.0,0.3}
\definecolor{lightblue}{rgb}{0.7,0.7,1}
\definecolor{lightred}{rgb}{1,0.7,.7}
\definecolor{purple}{RGB}{204,153,255}
\definecolor{lightgray}{rgb}{.95,0.95,0.95}
\definecolor{lightgreen}{rgb}{0.3,0.5,0.3}
\definecolor{darkgreen}{rgb}{0.05,0.3,0.05}
\newtheorem{myproposition}{Proposition}
\newtheorem{myremark}{Remark}
\newtheorem{myproblemstatement}{Problem Statement}
\newtheorem{mylemma}{Lemma}
\newtheorem{mytheorem}{Theorem}
\newtheorem{mycorollary}{Corollary}
\pgfplotsset{compat=newest}
\pgfplotsset{plot coordinates/math parser=false}
\pgfplotsset{plot coordinates/math parser=false}
\newlength\mywidth
\newlength\myheight
\definecolor{GraphSAC}{RGB}{228,26,28}%
\definecolor{GraphSAChk}{RGB}{0,26,208}%
\definecolor{Amen}{RGB}{55,126,184}%
\definecolor{Gae}{RGB}{77,175,74}
\definecolor{Radar}{RGB}{152,78,163}
\definecolor{Cut}{RGB}{255,127,0}%
\definecolor{Flake}{RGB}{153,153,153}%
\definecolor{Conductance}{RGB}{166,86,40}
\definecolor{Avg}{RGB}{247,129,191}
\definecolor{CGTF}{rgb}{1,0.04700,0.04100}%
\definecolor{SNMF}{rgb}{0.05000,0.32500,0.89800}%
\definecolor{CGTF1}{rgb}{0.92900,0.69400,0.12500}%
\definecolor{SNMF1}{rgb}{0.49400,0.18400,0.55600}%
\definecolor{CMTF}{rgb}{0.46600,0.67400,0.18800}%
\definecolor{PARAFAC}{rgb}{0.30100,0.74500,0.93300}%
\definecolor{NTF}{rgb}{1,1,0.08400}%
\newcommand{\marginl}{0.1}
\newcommand{\margin}{0.4}
\newcommand{\mylinewidth}{1.5}
\newcommand{\markwidth}{1.5}
\newcommand{\legendfontsize}{\small}
\newtheorem{theorem}{Theorem}
\newtheorem{corollary}{Corollary}
\newcommand{\transpose}{\top}
\title{GraphSAC: Detecting anomalies in large-scale graphs}
\author{Vassilis N. Ioannidis$^\star$, Dimitris Berberidis$^\dagger$  and
Georgios B. Giannakis$^\star$
\thanks{
The work in this paper has been supported by the Doctoral Dissertation Fellowship of the Univ. of Minnesota, the USA NSF grants 171141,  1500713, and  1442686.}
\\$^\star$ Digital Technology Center and Dept. of ECE, University of Minnesota, Minneapolis, USA\\
$^\dagger$ Heinz College of Information Systems and Public Policy, Carnegie Mellon University, Pittsburgh, USA\\
E-mails: $\{$ioann006, georgios$\}$ @umn.edu
}
\begin{document}

\maketitle
\begin{abstract}
A graph-based sampling and consensus (GraphSAC) approach is introduced to effectively  detect  anomalous nodes in large-scale graphs. Existing approaches rely on connectivity and attributes of all nodes to assign an anomaly score per node. However, nodal attributes and network links might be compromised by adversaries, rendering these holistic approaches vulnerable. Alleviating this limitation, GraphSAC randomly draws subsets of nodes, and relies on graph-aware criteria to judiciously filter out sets contaminated by anomalous nodes, before employing a semi-supervised learning (SSL) module to estimate nominal label distributions per node.
These learned nominal distributions are minimally affected by the anomalous nodes, and hence can be directly adopted for anomaly detection. Rigorous analysis provides performance guarantees for GraphSAC, by bounding the required number of draws. The per-draw complexity grows linearly with the number of edges, which implies efficient SSL, while draws can be run in parallel, thereby ensuring scalability to large graphs. GraphSAC is tested under different anomaly generation models based on random walks, clustered anomalies, as well as contemporary adversarial attacks for graph data. Experiments with real-world graphs showcase the advantage of GraphSAC relative to state-of-the-art alternatives.
\end{abstract}

\section{Introduction}
The ever-expanding interconnection of social, email, and media service platforms presents an opportunity for adversaries manipulating networked data to launch malicious attacks~\cite{goodfellow2014explaining,aggarwal2015outlier,yan2018deep,pang2018towards}. Adversarially perturbed or simply anomalous graph data may disrupt the operation of critical machine learning {algorithms} with severe consequences. Detecting anomalies in graph data is of major importance in a number of contemporary applications such as flagging ``fake news,'' unveiling malicious users in social networks, blocking spamming users in email networks, and uncovering suspicious transactions in financial or e-commerce networks~\cite{akoglu2015graph,yan2016robust}.  Detecting these anomalous nodes can be formulated as a learning task over an attributed graph.

Before positioning our work in context, we highlight different types of graph-based anomalies. \emph{Homophilic} anomalies characterize nodes whose attributes are dissimilar to those of their  neighbors~\cite{perozzi2016scalable,mcpherson2001birds}. These nodes violate the homophily property that postulates neighboring vertices to have similar attributes, and is heavily employed in semi-supervised learning (SSL)~\cite{smola2003kernels,kipf2016semi,gleich2015pagerank,kloster2014heat}. In a social network of voters for example, friends typically belong to the same voting party; see Fig.~\ref{fig:anom_vot}. Oftentimes, anomalous nodes may form dense connections giving rise  to clustered homophilic anomalies Fig.~\ref{fig:anom_clust}.
\emph{Structural} anomalies correspond to nodes with attributes that are dissimilar to structurally similar nodes~\cite{eberle2007discovering}. 
Structural similarity among nodes suggests that vertices involved in similar graph structural patterns possess related attributes~\cite{donnat2018learning}. In an academic collaboration network for instance, nodes with similar graph structure (central nodes) have similar labels (e.g., professors); see Fig.~\ref{fig:anom_col}. 

\begin{figure*}[t]
    \centering
    \begin{subfigure}[b]{0.25\textwidth}\hspace{-1cm}
        \includegraphics[width=1\textwidth]{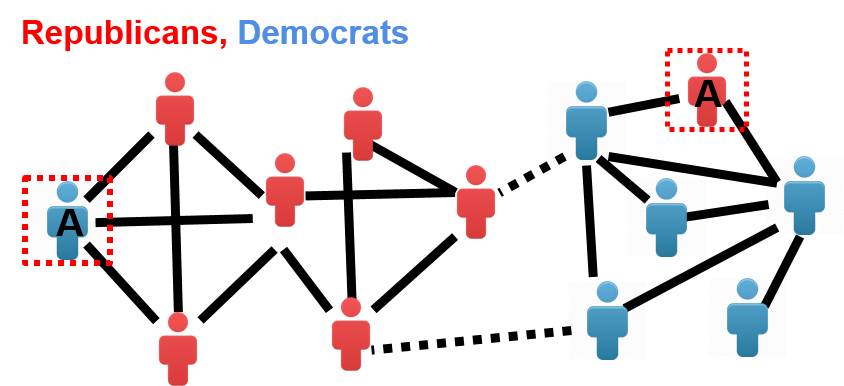}
        \caption{Social network of voters.}
        \label{fig:anom_vot}
    \end{subfigure}~
     \begin{subfigure}[b]{0.25\textwidth}
        \includegraphics[width=0.95\textwidth]{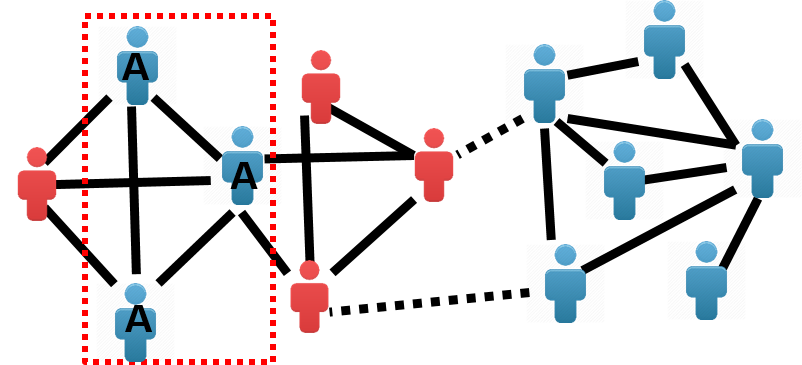}
        \caption{Clustered anomalies.}
        \label{fig:anom_clust}
    \end{subfigure}~
    \begin{subfigure}[b]{0.23\textwidth}\hspace{+0.5cm}
        \includegraphics[width=1\textwidth]{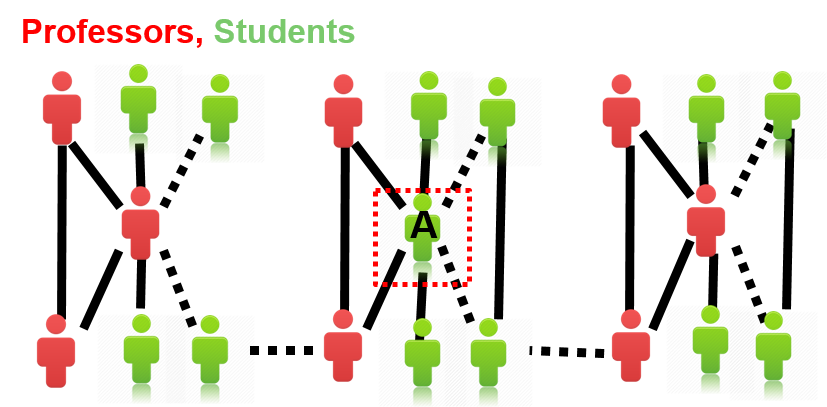}
        \caption{Academic network.}
        \label{fig:anom_col}
    \end{subfigure}
    \caption{Nodes in dotted square exhibit (a) (b) homophilic and (c) structural anomalies.}
    \label{fig:anom}
\end{figure*}

\subsection{Related Work}
Todays era of data deluge has grown the interest for detecting anomalies in collections of high-dimensional data~\cite{iwata2016multi,zimek2012survey}.  This paper deals with anomalies in data that exhibit inter-dependencies captured by a graph~\cite{akoglu2015graph}. The inaccessibility and prohibitive cost associated with obtaining ground-truth anomalies motivates the development of mainly unsupervised techniques.

Methods for detecting anomalies in attributed graphs can be roughly classified in three categories. \emph{Community-based} approaches find clusters of nodes and search for anomalies within each cluster. A probabilistic method is developed in \cite{gao2010community} that jointly discovers communities, and detects community outliers as anomalies. Similarly, \cite{perozzi2016scalable} identifies anomalies by measuring the attribute correlation of nodes within each node's egonet, meaning the subgraph induced by the node of interest, its one-hop neighbors, and all their connections. \emph{Subspace-based} approaches focus on spotting anomalies in subspaces extracted from the nodal features~\cite{sanchez2013statistical}. 
On the other hand, \emph{model-based} methods learn an embedding per node and flag anomalies by measuring the model-fitting  error~\cite{li2017radar,ding2019deep}. A parametric model is developed in \cite{li2017radar} to capture the coherence among the attributes of nodes and their connectivity. A deep graph autoencoder is advocated in \cite{ding2019deep} that fuses attributes and connections to an embedding per node, and identifies anomalies using the reconstruction error at the decoder side. Despite their empirical success, these contemporary approaches are confronted with a number of challenges. The computational overhead associated with community detection, subspace extraction and deep learning, discourages their applicability to large-scale graphs. The 
local scope of community-based methods confines the breadth of the anomaly detector that is further vulnerable to clusters of connected anomalous nodes. Finally, all aforementioned approaches ultimately learn an anomaly score that relies on attributes and connections of \emph{all} nodes. However, either the attributes or the network links for some nodes may be compromised by adversaries~\cite{zugner18adv,xu2019topology}.

\subsection{Contributions}
Addressing the aforementioned challenges, we introduce a graph random sampling and consensus (GraphSAC) framework for detecting anomalous nodes on large graphs. Instead of directly considering all nodes, our novel method samples subsets of nodes, and relies on graph-aware criteria to judiciously filter out subsets contaminated by anomalous nodes. The ``clean'' sets are utilized by a SSL module that estimates a nominal class distribution per node. The core intuition behind GraphSAC is that attributes of anomalous nodes will have poor predictive performance in a SSL task. The contribution of this work is fourfold.
i) A novel approach to estimating a class distribution per node that is guaranteed to be minimally affected by anomalous nodes; ii) A versatile framework that adapts to different types of anomalies via an application-specific SSL module (cf. Fig.~\ref{fig:anom});  
iii) Scalability to large-scale graphs (complexity is linear in the number of edges); and iv) experimental evidence confirming that the novel graph anomaly detector outperforms state-of-the-art approaches in identifying clusters of anomalous nodes, as well as contemporary adversarial attacks on graph data.
	
\section{Graph-based Random Sampling and Consensus} Consider a graph $\mathcal{G}:=\{\mathcal{V},\mathcal{E}\}$, where $\mathcal{V}:=\{n_1,n_2,\ldots,n_N\}$ is the vertex set, and $\mathcal{E}$ the edge set of $E$ edges . The connectivity of $\mathcal{G}$ is described by an adjacency matrix ${\mathbf A}\in{\mathbb R}^{ N   \times N   }$, where $[{\mathbf A}]_{n,n'}>0$  if $(n,n')\in \mathcal{E}$. Each node $n\in\mathcal{V}$ is associated with one or more scalar labels $y_n\in\{1,\ldots,C\}$ that form the $N\times C$ matrix $\mathbf{Y} :=[\mathbf{y}_1^\transpose,\ldots,\mathbf{y}_N^\transpose]^\transpose$ with ${[\mathbf{Y}]}_{n,c}=1$, if $y_n=c$, and $0$ otherwise.

Given ${\mathbf A}$ and $\mathbf{Y}$ the goal in this paper is to detect ${K}$ anomalous nodes with indices in the set $\mathcal{A}:=\{{n}_{1},\ldots,{n}_{{K}}\}$. Such nodes are expected to violate a certain property such as homophily. To this end, we require a model that relates the graph with the labels, and promotes the desired properties. 

An immediate approach is to directly consider all nodal labels and connections in a graph-based model. However, such a holistic approach will be vulnerable to the inclusion of anomalous nodes that will bias the learned model and poison the anomaly detection framework. 

Instead, our idea is to sample labels $y_n$ at random subsets of nodes $n\in\mathcal{L}\subset \mathcal{V}$ and prudently discard contaminated subsets. Given $\mathcal{L}$,
we  perform SSL to predict the labels across all nodes. SSL methods utilize the labels at ${\mathcal L}$ along with the graph connectivity ${\mathbf A}$ to predict the labels at the unlabeled nodes $\mathcal{V}\setminus\mathcal{L}$. We draw inspiration from the random sampling approach for robust model fitting in image analysis~\cite{bolles1981ransac}. The SSL model $f(\cdot)$ utilizes the labels in $\mathcal L$ to estimate the $N\times C$ label distribution matrix as follows
\begin{align} 
\label{eq:ssl}
{\hat{\mathbf P}}=f(\{y_n\}_{n\in\mathcal L},\mathbf{A})
\end{align}
where $\hat{{P}}_{(n,c)}\in[0,1]$ can be interpreted as the probability that $y_n=c$. Henceforth, for notation brevity we define the SSL model as follows $f({\mathcal L}):=f(\{y_n\}_{n\in\mathcal L},\mathbf{A})$. The choice of $f(\cdot)$ is dictated by specific properties one may want to capture; see also Fig. \ref{fig:anom}. 

Nevertheless, if $\mathcal{L}\cap\mathcal{A}\ne\emptyset$, the predicted label distributions will be affected by the anomalous nodes. To bypass this hurdle, we formulate a hypotheses test to assess if anomalies are present in $\mathcal{L}$ by evaluating the predictive SSL performance instantiated with $\mathcal{L}$, namely $f({\mathcal L})$. Our test relies on the premise that attributes of anomalous nodes will have poor predictive performance for SSL. 

\begin{algorithm}[t]
\caption{GraphSAC}
\begin{algorithmic}
\STATE\textbf{Input:} $f(\cdot,\cdot)$, $\mathbf{A}$, $\{y_{{n}}\}_{{n}=1}^ N, I,  T, i\leftarrow0$
\STATE 1. \textbf{while}  $i<I$ \textbf{do}
\STATE 2. \hspace{0.1cm} Select $\mathcal{L}^{(i)}$ at random
\STATE 3. \hspace{0.1cm} Estimate ${\hat{\mathbf P}}^{(i)}_{G}=f(\mathcal{L}^{(i)})$ and  consensus set  $\mathcal{U}^*$
\STATE 4. \hspace{0.1cm} If $|\mathcal{U}^*|/N<T$ then $\delta_f(\mathcal{L}^{(i)})=0$,  otw. $\delta_f(\mathcal{L}^{(i)})=1$
\STATE 5. \hspace{0.1cm} $i\leftarrow i+1$
\STATE 6. \textbf{end while}
\STATE 7. Obtain $\hat{\mathbf{P}}_{G}$ as in \eqref{eq:grout} 
\STATE 8. Obtain anomaly scores $\{\phi_{n}\}_{n=1}^N$ as in \eqref{eq:anoma_score} 
\end{algorithmic}
\end{algorithm}

Our iterative algorithm termed GraphSAC is summarized as Algorithm 1. Per iteration $i$, we first sample $S$ nodes uniformly at random from $\mathcal{V}$ without replacement, that is 
\begin{align}
\label{eq:samplgrsac}
\mathcal{L}^{(i)}\sim \mathrm{Unif}(\mathcal{L}_S)
\end{align}
where $\mathcal{L}_S:=\{ \mathcal{L}\subseteq \mathcal{V} : |\mathcal{L}| = S \}$ is the set of all $S-$size subsets.
Given the labels in $\mathcal{L}^{(i)}$,  the SSL model~\eqref{eq:ssl} outputs the predicted label distribution matrix ${\hat{\mathbf P}}^{(i)}_{G}:=f({\mathcal L}^{(i)})$.
Nodes whose labels are correctly predicted by $f({\mathcal L}^{(i)})$ form the consensus set  $\mathcal{U}^*:=\{n: c'=\arg\max_{c}{[\hat{\mathbf P}}^{(i)}_{G}]_{n,c},~\text{ and }~y_n=c'\}$. 

Next, GraphSAC compares the accuracy of $f({\mathcal L}^{(i)})$ using e.g., the ratio of nodes in the consensus set to a {prespecified} threshold $T$. If $|\mathcal{U}^*|/N>T$, GraphSAC decides that ${\mathcal L}^{(i)}$ does not contain anomalies, meaning $\delta_f(\mathcal{L}^{(i)})=1$; otherwise, the set is contaminated with anomalies and filtered out, that is $\delta_f{(\mathcal{L}^{(i)})}=0$. The following test of hypotheses is performed
\begin{align}
\label{eq:graphsacdetect}
    \left\{
\begin{array}{ll}
    H_0:~~\delta_f(\mathcal{L}^{(i)})=1,~~~~\text{if}~~~|\mathcal{U}^*|/N>T \\
    H_1:~~\delta_f(\mathcal{L}^{(i)})=0,~~~~\text{otherwise}.
\end{array} 
\right. 
\end{align}
Essentially, this test filters out subsets that are contaminated with anomalies $\mathcal{L}\cap\mathcal{A}\ne\emptyset$, and will bias the learned model.
Hence, $\delta_f(\cdot)$ corresponds to a filter that aims to retain only ``clean'' sets i.e. $\mathcal{L}\cap\mathcal{A}=\emptyset$.
We will elaborate on the performance of this filter in Section 3. The resulting sample average of the nominal label distribution is given by  
\begin{align}
\label{eq:grout}
	    \hat{\mathbf{P}}_{G}=\frac{\sum_{i=1}^I f({\mathcal L}^{(i)})\delta_f{(\mathcal{L}^{(i)})}}{\sum_{i=1}^I\delta_f{(\mathcal{L}^{(i)})}}.
\end{align}
Even though  $\mathcal{L}^{(i)}$ are drawn uniformly at random \eqref{eq:samplgrsac}, $\delta_f(\cdot)$ introduces a sampling bias towards ``clean'' subsets. Consequently, $\hat{\mathbf{P}}_{G}$ is minimally affected by anomalous nodes and represents the  nominal class distribution. 

Finally, we select as anomalous the nodes with the largest distance between their nominal distribution and their actual labels.   GraphSAC estimates an  $N\times 1$ anomaly score vector $\bm{\phi}(\hat{\mathbf{P}}_{G})$ with entries
\begin{align}
\label{eq:anoma_score}
\phi_{n}=\text{dist}(\hat{\mathbf{p}}_{n}^{G},\mathbf{y}_{n}),~~~~\forall n \in \mathcal{V}
\end{align}
where 
$\hat{\mathbf{p}}_{n}^{G}$ is the $n$-th row of $\hat{\mathbf{P}}_{G}$, ${\hat{ \mathbf{p}}}_{n}^{(i)}$ is the $n$-th row of ${\hat{\mathbf P}}^{(i)}_{G}$,  and dist$(\cdot,\cdot)$ is the cross-entropy loss Therefore, $\phi_{n}$ is larger if $n$ does not adhere to the graph-related properties promoted by the  SSL model. 
Hence, we rank the nodes in decreasing order with respect to $\phi_n$, and select the first K nodes as anomalous. Fig~\ref{fig:encoder} illustrates the GraphSAC operations.

\begin{figure*}\centering
    \input{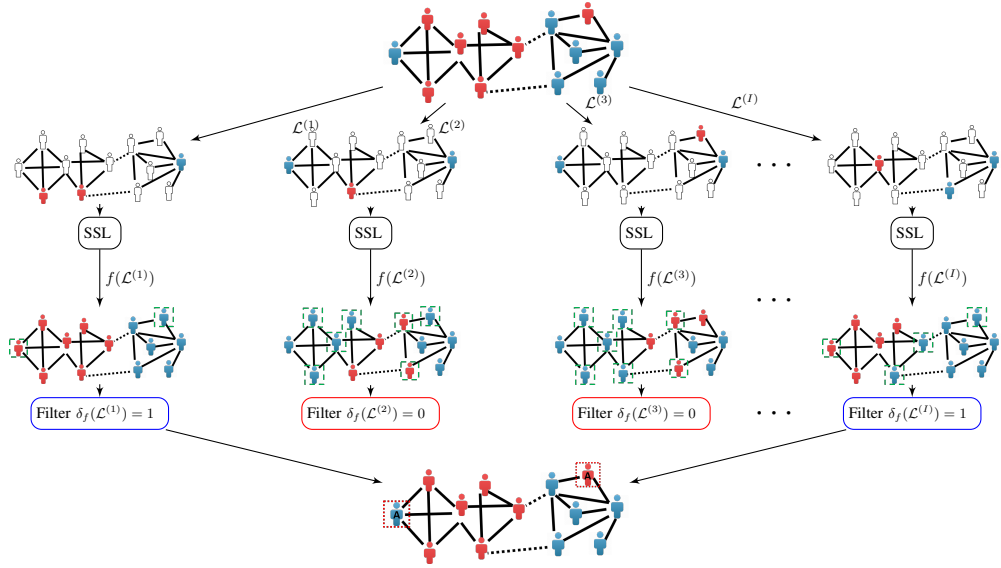}
	\caption{An illustration of the operation of GraphSAC.  The first row represents the available graph and labels. The second row shows the sampled labels. The third row contains the SSL module that outputs the predicted labels, where green dotted lines indicate incorrectly classified nodes. The filter in the fifth row decides whether  $\mathcal{L}^{(i)}$ contains anomalies or not. The predictions with a larger number of missclasified nodes are discarded (colored red). The final anomalies are detected by combining the predictions from the ``clean'' subsets.} \label{fig:encoder}
\end{figure*}

Instead of using as many nodes as possible to obtain the solution, GraphSAC relies on small sets of $S$ nodes and SSL-aided hypotheses testing to avoid subsets contaminated with anomalous nodes. The small sample size ($S\ll N$) enables GraphSAC to remain operational even under adverse conditions where $K$ is relative large. GraphSAC's robustness is justified since only one ``clean'' ${\mathcal L}^{(i)}$ is required for a valid $\hat{\mathbf{P}}_G$~\eqref{eq:grout}. 

The computational complexity of GraphSAC per $i$ is dictated by the label prediction step $f({\mathcal L})$ that scales linearly with the number of edges $\mathcal{O}(E)$ for scalable  SSL  methods~\cite{zhu2003semi,chapelle2006}. Further, since the draws $\mathcal{L}^{(i)}$ are independent, each GraphSAC iteration $i$ can be readily parallelized, thereby ensuring scalability to large-scale graphs. 

Notice that so far $f(\cdot)$ is not specified. Hence, GraphSAC may adapt to the pertinent type of anomalies (see Fig.~\ref{fig:anom}) by appropriately choosing the model $f(\cdot)$. {Homophilic} anomalies for example, call for SSL methods e.g. diffusion-based classifiers \cite{zhu2003semi,chapelle2006} or contemporary graph convolutional neural networks (GCN)s~\cite{kipf2016semi}. On the other hand, {structural} anomalies necessitate models that promote structural similarities among nodes such as the work in~\cite{donnat2018learning}.

\section{Analytical guarantees}
This section strengthens our proposed randomized anomaly detection framework with analytical guarantees. Towards streamlining the analysis, we interpret the filter $\delta_f(\cdot)$ as introducing a sampling bias to the uniform sampling scheme in~\eqref{eq:samplgrsac} towards ``clean'' subsets.\footnote{An alternative analysis accounting directly for $\delta_f(\cdot)$ is in Section D of the supplementary material.}  Specifically, is rewritten $\hat{\mathbf{P}}_G$ as
\begin{align}
\label{eq:altgrout}
    \hat{\mathbf{P}}_{G}&:=\frac{1}{I}\sum_{i=1}^I
    f({\mathcal L}^{(i)}) \\
    \shortintertext{where 
    $\mathcal{L}^{(i)}\sim p_G(\mathcal{L})$ 
     are drawn from the GraphSAC biased sampling scheme
    instead of~\eqref{eq:samplgrsac} with}
\label{eq:grsamplalt}
 p_G(\mathcal{L})&=
    \begin{cases} p_d(\mathcal{L}),& \forall\mathcal{L}\in \bar{\mathcal{L}}_S\\
    p_{f}(\mathcal{L})
    ,&\forall\mathcal{L}\in \bar{\mathcal{L}}_S^c
    \end{cases}    
\end{align}
where  $\bar{\mathcal{L}}_S$ is a set of nodal subsets with no anomalous nodes $\bar{\mathcal{L}}_S:=\{ \mathcal{L}\subseteq {\mathcal{L}}_S , \mathcal{L}\cap \mathcal{A} = \emptyset \}$, while the complementary set $\bar{\mathcal{L}}_S^c$ contains all the remaining size-$S$ subsets  $\bar{\mathcal{L}}_S^c:={\mathcal{L}}_S \setminus  \bar{\mathcal{L}}_S$. Hence, $p_G(\mathcal{L})$  is related to the filter \eqref{eq:graphsacdetect} through 
\begin{align}
\label{eq:con_det_pmf}
p_d(\mathcal{L})&=p(\delta_f(\mathcal{L})=1|\mathcal{L}\in \bar{\mathcal{L}}_S)p(\mathcal{L}\in \bar{\mathcal{L}}_S)\\
p_f(\mathcal{L})&=p(\delta_f(\mathcal{L})=1|\mathcal{L}\in \bar{\mathcal{L}}^c_S)p(\mathcal{L}\in \bar{\mathcal{L}}^c_S)\nonumber
\end{align}
where $p(\mathcal{L}\in \bar{\mathcal{L}}_S)={|\bar{{\mathcal{L}}}_S|}/{|{\mathcal{L}}_S|}$ and $p(\mathcal{L}\in \bar{\mathcal{L}}^c_S)={|\bar{{\mathcal{L}}}_S^c|}/{|{\mathcal{L}}_S|}$ in accordance with the uniform sampling \eqref{eq:samplgrsac}. 
Notice that for this section all $I$ samples are included in the sample average \eqref{eq:altgrout}, but if the sample is contaminated, i.e. $\mathcal{L}\in \bar{\mathcal{L}}^c_S$, $\mathcal{L}$ will have a smaller probability to be sampled.
GraphSAC aims at a class distribution per node that is not affected by the anomalous nodes, but takes into account only the nominal nodes, and can thus be readily utilized for anomaly detection. Hence, the desired probability matrix is 
\begin{align}
\label{eq:altpnraphsac}
    \mathbf{P}_{N}&:=\mathbb{E}_{\mathcal{L}\sim p_N}[f({\mathcal{L}})] \\
    \shortintertext{where ${\mathcal{L}}$ are drawn uniformly 
    from 
    $\bar{\mathcal{L}}_S$ that is}
   \label{eq:altsampl} p_N(\mathcal{L})&=
    \begin{cases} \frac{1}{|\bar{\mathcal{L}}_S|},&\forall\mathcal{L}\in \bar{\mathcal{L}}_S\\
    ~~0,&\forall\mathcal{L}\in \bar{\mathcal{L}}_S^c \;.
\end{cases}
\end{align}
Indeed, $ \mathbf{P}_{N}$ captures the nominal class distribution per node that is not affected by the anomalous nodes and conforms to the properties promoted by the SSL model.
As a result, the largest entries in the anomaly score vector $\bm{\phi}(\mathbf{P}_{N})$ (cf. \eqref{eq:anoma_score}) represent the anomalous nodes.

However, direct estimation of  $\mathbf{P}_{N}$ is not feasible since  $\mathcal{A}$ is unknown. If on the other hand all nodes are directly accounted for, anomalous ones will be included that will in turn bias the learned probability matrix. To obviate this hurdle, 
GraphSAC introduces  \eqref{eq:graphsacdetect} to filter out contaminated subsets, giving rise to the biased sampler in~\eqref{eq:grsamplalt}.
GraphSAC's ultimate target is the expected probability matrix
\begin{align}
    \label{eq:altpgraphsac}
        \mathbf{P}_{G}:= \mathbb{E}_{\mathcal{L}\sim p_G}[ f(\mathcal{L})].
\end{align}
Upon considering $p_f(\mathcal{L})=0,~\forall\mathcal{L}\in \bar{\mathcal{L}}^c_S$ and  $p_d(\mathcal{L})=1/|\bar{\mathcal{L}}_S|,~\forall\mathcal{L}\in \bar{\mathcal{L}}_S$, as well as expanding the expectation in \eqref{eq:altpgraphsac} and \eqref{eq:altpnraphsac}, GraphSAC's expected probability matrix reduces to the desired one, meaning $\mathbf{P}_{G}=\mathbf{P}_{N}$. This corresponds to a perfect filter in \eqref{eq:graphsacdetect} that disregards all contaminated subsets $\delta_f(\mathcal{L})=0, \forall\mathcal{L}\in \bar{\mathcal{L}}^c_S$, and retains all the clean ones $\delta_f(\mathcal{L})=1, \forall\mathcal{L}\in \bar{\mathcal{L}}_S$.\\
\noindent
Thus,  GraphSAC's performance is directly related to the distance among $\hat{\mathbf{P}}_G$ and $\mathbf{P}_{N}$
\begin{align}
    \text{TV}(\hat{\mathbf{P}}_G,\mathbf{P}_{N})&:=\|\hat{\mathbf{P}}_G- \mathbf{P}_{N}\|_1\\&\le
    \label{eq:triang_ineq}\|\mathbf{P}_G- \mathbf{P}_{N}\|_1+
    \|\hat{\mathbf{P}}_G- \mathbf{P}_G\|_1
\end{align}
where \eqref{eq:triang_ineq} follows from the triangle inequality, and $\|\mathbf{Z}\|_1$ represents the sum of the absolute values of the $\mathbf{Z}$ entries.
Adhering to \eqref{eq:triang_ineq}, the following research questions have to be addressed to  characterize the performance of GraphSAC.
\begin{itemize}
\item[\textbf{RQ1}.] How $\|\mathbf{P}_G- \mathbf{P}_{N}\|_1$ relates to the filter performance? and
\item[\textbf{RQ2}.] How  $\|\hat{\mathbf{P}}_G- \mathbf{P}_G\|_1$  evolves as the number of draws $I$ increases?
\end{itemize}
The following analysis aspires to provide tangible answers to aforementioned questions. First, consider a simplified version of $p_G(\mathcal{L})$ given by
\begin{align}
\label{eq:pgrsampfin}
 p_G(\mathcal{L})&=
    \begin{cases} d,& \forall\mathcal{L}\in \bar{\mathcal{L}}_S\\
   {f}
    ,&\forall\mathcal{L}\in \bar{\mathcal{L}}_S^c \;.
    \end{cases}
\end{align}
which implies that any clean (contaminated) subset in $\bar{\mathcal{L}}_S$ ($\bar{\mathcal{L}}^c_S$) has the same sampling probability ${d}$ ($f$). This approximation suggests that GraphSAC samples all contaminated samples with a certain probability, and all non-contaminated with a different one (cf. \eqref{eq:con_det_pmf}). Extended analysis on a refined sampling scheme different from \eqref{eq:pgrsampfin} is included in Section D of the supplementary material, e.g. if a subset contains a large number of anomalies, the probability this sample is rejected is higher relative to a sample with only one anomaly. 

\begin{theorem} (Proof in Section A of the supplementary material.) Let $\mathbf{P}_A:=\mathbb{E}_{\mathcal{L}\sim\text{Unif}(\bar{\mathcal{L}}_S^c)}\big[ f(\mathcal{L})\big]$ denote the expected distribution when anomalies are sampled. 
It then holds for the total variation distance between $\mathbf{P}_{G}$ and $\mathbf{P}_{N}$ that 
\begin{align}
    \|\mathbf{P}_G- \mathbf{P}_{N}\|_1
    =&\frac{|\bar{\mathcal{L}}_S^c|^2}{|{\mathcal{L}}_S|}p_{fa}\|\mathbf{P}_A -\mathbf{P}_{N}| \!\lceil_1\\=&|\bar{\mathcal{L}}_S^c|f\|\mathbf{P}_A -\mathbf{P}_{N}| \!\lceil_1
\end{align}
where $p_{fa}:=p(\delta_f(\mathcal{L})=1|\mathcal{L}\in \bar{\mathcal{L}}^c_S)$ is the probability of false alarms for \eqref{eq:graphsacdetect} that can be also expressed as $p_{fa}=({|{\mathcal{L}}_S|}/{|\bar{{\mathcal{L}}}_S^c|})f  $
with $|\bar{\mathcal{L}}^c_S|={N\choose S} -{N-K\choose S}$ and $|{\mathcal{L}}_S|={N\choose S}$.
\end{theorem}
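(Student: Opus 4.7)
The plan is to directly expand $\mathbf{P}_G$ and $\mathbf{P}_N$ as finite sums over the respective subset families under the simplified sampling scheme~\eqref{eq:pgrsampfin}, and then apply the probability-normalization identity $|\bar{\mathcal{L}}_S|\,d + |\bar{\mathcal{L}}_S^c|\,f = 1$ (which must hold because $p_G$ is a p.m.f.\ over $\mathcal{L}_S$) to eliminate $d$ in favor of $f$. I expect the whole argument to reduce to a single line of algebra once the right quantities are identified; the only slightly subtle step is reconciling the two alternative parameterizations $(d,f)$ and $p_{fa}$ that the statement uses.

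Concretely, I would first expand \eqref{eq:altpgraphsac} using \eqref{eq:pgrsampfin} to obtain
\[
\mathbf{P}_G \;=\; d\sum_{\mathcal{L}\in\bar{\mathcal{L}}_S} f(\mathcal{L}) \;+\; f\sum_{\mathcal{L}\in\bar{\mathcal{L}}_S^c} f(\mathcal{L}) \;=\; d\,|\bar{\mathcal{L}}_S|\,\mathbf{P}_N \;+\; f\,|\bar{\mathcal{L}}_S^c|\,\mathbf{P}_A,
\]
where the second equality uses \eqref{eq:altpnraphsac} for $\mathbf{P}_N$ and the definition of $\mathbf{P}_A$ in the theorem. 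Substituting $d\,|\bar{\mathcal{L}}_S| = 1 - f\,|\bar{\mathcal{L}}_S^c|$ from the normalization identity and subtracting $\mathbf{P}_N$ collapses this to $\mathbf{P}_G - \mathbf{P}_N = f\,|\bar{\mathcal{L}}_S^c|\,(\mathbf{P}_A - \mathbf{P}_N)$; taking the $\ell_1$ norm of both sides yields the second equality of the claim, namely $\|\mathbf{P}_G - \mathbf{P}_N\|_1 = |\bar{\mathcal{L}}_S^c|\,f\,\|\mathbf{P}_A - \mathbf{P}_N\|_1$.

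For the first equality, I would read the identity $f = p_{fa}\,|\bar{\mathcal{L}}_S^c|/|\mathcal{L}_S|$ off from \eqref{eq:con_det_pmf}: under the simplification of \eqref{eq:pgrsampfin} every contaminated subset has probability $p_f(\mathcal{L}) = f$, while by definition $p_f(\mathcal{L}) = p_{fa}\cdot p(\mathcal{L}\in\bar{\mathcal{L}}_S^c) = p_{fa}\cdot |\bar{\mathcal{L}}_S^c|/|\mathcal{L}_S|$. Substituting this expression for $f$ into $|\bar{\mathcal{L}}_S^c|\,f$ produces $|\bar{\mathcal{L}}_S^c|^2\,p_{fa}/|\mathcal{L}_S|$, matching the first equality. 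The stated cardinalities $|\mathcal{L}_S| = \binom{N}{S}$ and $|\bar{\mathcal{L}}_S^c| = \binom{N}{S} - \binom{N-K}{S}$ then follow from elementary counting, since $|\bar{\mathcal{L}}_S| = \binom{N-K}{S}$ is the number of $S$-subsets of $\mathcal{V}\setminus\mathcal{A}$.

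The proof is essentially a chain of definitional rewrites, so I do not anticipate a genuinely hard step; the main obstacle is bookkeeping. In particular, the crucial substitution relies on $p_G$ summing to one and on the two parameterizations $(d,f)$ and $p_{fa}$ describing the same sampler, and it is easy to drop a sign or a combinatorial factor when rearranging. Beyond this algebraic care, no heavier machinery (concentration, tail bounds, etc.) is needed, since the claim is an exact equality about expected distributions rather than a high-probability statement.
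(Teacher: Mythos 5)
Your proposal is correct and follows essentially the same route as the paper's proof: expanding $\mathbf{P}_G$ into the convex combination $d\,|\bar{\mathcal{L}}_S|\,\mathbf{P}_N + f\,|\bar{\mathcal{L}}_S^c|\,\mathbf{P}_A$, eliminating $d$ via the normalization $|\bar{\mathcal{L}}_S|d + |\bar{\mathcal{L}}_S^c|f = 1$, relating $f$ to $p_{fa}$ through \eqref{eq:con_det_pmf}, and finishing with the same counting argument. The only cosmetic difference is the order in which the two parameterizations are reconciled.
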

\noindent Theorem 1 asserts that the desired distance is equal to the distance between the nominal distribution and the one affected by the anomalies scaled by the probability that GraphSAC fails to identify the anomalies. 
An immediate observation is that for a perfect GraphSAC filter it holds that $p_{fa}=0$, and one would obtain the desired $\mathbf{P}_G=\mathbf{P}_{N}$. Another consequence of Theorem 1, is that as the number of anomalies $K$ increases, so does $|\bar{\mathcal{L}}^c_S|$ and  $\|\mathbf{P}_{G}-\mathbf{P}_{N}\|_1$, conditioned that the filter performance does not change. As demonstrated in the experiments, even if anomalous subsets are miss-classified as nominal by the filter for a specific draw, these will contain a small number of anomalies and will not affect the overall anomaly detection performance.

Towards addressing Q2 we utilize concentration inequalities theory applied to random matrices~\cite{tropp2015introduction} and establish the following. 
\begin{theorem}(Proof in Section B of the supplementary material.) 
With $\hat{\mathbf{P}}_{G}=\frac{1}{I}\sum_{i=1}^I\mathbf{P}^{(i)}_{G}$, where  $\mathbf{P}^{(i)}_{G}=f(\mathcal{L}^{(i)})$ and $\mathbf{P}_{G}= \mathbb{E}_{\mathcal{L}\sim p_G}[ f(\mathcal{L})]$, it holds that  
\begin{align}
    &\mathbb{E}\big\{\|\hat{\mathbf{P}}_{G}-{\mathbf{P}}_{G}\|\big\}\le\sqrt{\frac{2N\log(N+C)}{I}}+\frac{2\sqrt{N}\log(N+C)}{3I}\label{eq:normbound}\\
    \shortintertext{and for all $t>0$}
    &\mathbb{P}\big\{\|\hat{\mathbf{P}}_{G}-{\mathbf{P}}_{G}\|\ge t\big\}\le(N+C)\label{eq:propbound}
    \exp{\left(\frac{-I t^2}{N + 2\sqrt{N}t/3}\right)}.
\end{align}
\end{theorem}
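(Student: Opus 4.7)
The plan is to recognize Theorem 2 as a direct application of the matrix Bernstein inequality to the centered sum $\sum_{i=1}^I \mathbf{Z}_i$, where $\mathbf{Z}_i := f(\mathcal{L}^{(i)}) - \mathbf{P}_G$, and then to scale the result by $1/I$ to obtain the stated bounds on $\hat{\mathbf{P}}_G - \mathbf{P}_G$. Since the $\mathcal{L}^{(i)}$ are i.i.d.\ draws from $p_G$ by construction, the $\mathbf{Z}_i$ form an independent, zero-mean sequence of $N \times C$ real random matrices, exactly the setting in which (rectangular) matrix Bernstein applies via the Hermitian dilation trick; the appearance of $N+C$ in the stated bounds is the tell-tale sign of this dilation.

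First I would establish the uniform norm bound on each summand. Because $f(\mathcal{L}^{(i)})$ is a predicted label distribution, each of its rows is a probability vector, so every entry lies in $[0,1]$ and $\|f(\mathcal{L}^{(i)})\|_\infty \le 1$ (max absolute row sum) while $\|f(\mathcal{L}^{(i)})\|_1 \le N$ (max absolute column sum). The interpolation $\|\mathbf{M}\| \le \sqrt{\|\mathbf{M}\|_1\,\|\mathbf{M}\|_\infty}$ then gives $\|f(\mathcal{L}^{(i)})\| \le \sqrt{N}$, and the same bound holds for $\mathbf{P}_G$ since it is an average of such matrices. Hence $\|\mathbf{Z}_i\| \le 2\sqrt{N}$, which will play the role of the almost-sure uniform bound $L$ in Bernstein. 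Next, I would bound the matrix variance parameter $\sigma^2 := \max\{\|\sum_i \mathbb{E}[\mathbf{Z}_i \mathbf{Z}_i^\transpose]\|, \|\sum_i \mathbb{E}[\mathbf{Z}_i^\transpose \mathbf{Z}_i]\|\}$. Using $\mathbb{E}[\mathbf{Z}_i \mathbf{Z}_i^\transpose] \preceq \mathbb{E}[f(\mathcal{L}^{(i)}) f(\mathcal{L}^{(i)})^\transpose]$ and the uniform spectral bound $\|f(\mathcal{L}^{(i)})\|^2 \le N$ yields $\|\mathbb{E}[\mathbf{Z}_i \mathbf{Z}_i^\transpose]\| \le N$, and analogously for $\mathbf{Z}_i^\transpose \mathbf{Z}_i$; summing the $I$ identically distributed terms gives $\sigma^2 \le NI$.

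With $L = 2\sqrt{N}$, $\sigma^2 \le NI$, and ambient dimensions $N+C$, the rectangular matrix Bernstein tail inequality (Tropp, \cite{tropp2015introduction}) yields
\begin{align*}
\mathbb{P}\!\left\{\Big\|\sum_{i=1}^I \mathbf{Z}_i\Big\| \ge s\right\}
\le (N+C)\exp\!\left(\frac{-s^2/2}{NI + 2\sqrt{N}s/3}\right),
\end{align*}
and the matching expectation bound
\begin{align*}
\mathbb{E}\Big\|\sum_{i=1}^I \mathbf{Z}_i\Big\| \le \sqrt{2NI\log(N+C)} + \tfrac{2\sqrt{N}}{3}\log(N+C).
\end{align*}
The substitution $s = It$ in the tail bound and division of both sides of the expectation bound by $I$ give exactly \eqref{eq:normbound} and \eqref{eq:propbound}, after the elementary simplification of the Bernstein exponent.

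The main obstacle is mostly bookkeeping: one must be careful that the $\sqrt{N}$ uniform spectral bound is tight enough (exploiting that rows of $f(\mathcal{L}^{(i)})$ are probability simplices rather than treating it as a generic $[0,1]$ matrix, which would give a looser $\sqrt{NC}$), and one must correctly push the scaling $1/I$ through both the variance and the uniform bound when converting the result for $\sum_i \mathbf{Z}_i$ into a result for $\hat{\mathbf{P}}_G - \mathbf{P}_G = \frac{1}{I}\sum_i \mathbf{Z}_i$. A minor subtlety is that the stated probability bound differs from the textbook Bernstein by the factor of $1/2$ inside the exponent; this can be absorbed either by slightly loosening constants or by invoking the version of matrix Bernstein where that factor is already folded into the variance term. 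No structural assumption on $f(\cdot)$ beyond its being a row-stochastic map is needed, which is why the conclusion is agnostic to the specific SSL module used by GraphSAC.
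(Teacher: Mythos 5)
Your proposal follows essentially the same route as the paper's proof: both apply the matrix Bernstein inequality of \cite{tropp2015introduction} after bounding the spectral norm of each summand by $\sqrt{N}$ via $\|\mathbf{M}\|\le\sqrt{\|\mathbf{M}\|_1\|\mathbf{M}\|_\infty}$ (exploiting row-stochasticity) and the second-moment/variance parameter by $N$, the only cosmetic difference being that you center the summands and take $L=2\sqrt{N}$ where the paper invokes the uncentered corollary with $L=\sqrt{N}$, which yields identical constants in \eqref{eq:normbound}. The factor-of-$\nicefrac{1}{2}$ discrepancy you flag in \eqref{eq:propbound} is not a defect of your argument but of the theorem statement itself --- the paper's own invocation of Bernstein also produces the exponent $-It^2/2$ rather than $-It^2$, so it cannot be ``absorbed by loosening constants'' (that would go the wrong way); it is simply a typo in the stated bound.
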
    
All the parameters in the right hand side of \eqref{eq:normbound} and \eqref{eq:propbound} are either known or controllable, and relate to the network size $N$ and the number of classes $C$. The expectation bound \eqref{eq:normbound} suggests that by increasing the number of draws $I$ the sample average of the random matrices will approach the desired ensemble mean. On the other hand, \eqref{eq:propbound} bounds the tail of the probability distribution of the difference. It can be shown that for $t\le \sqrt{N}$ the tail probability decays as fast as the tail of a Gaussian random variable with variance proportional to $N$, while for $t\ge \sqrt{N}$ it decays as that of an exponential random variable whose mean is proportional to $\sqrt{N}$.

Retracing back to \eqref{eq:triang_ineq}, we deduce that $\|\hat{\mathbf{P}}_G- \mathbf{P}_G\|_1$ decreases for increasing $I$, whereas $\|{\mathbf{P}}_N- \mathbf{P}_G\|_1$ directly depends on the filter performance.  In practice, GraphSAC amounts to ranking the per-node anomaly scores. This ranking may be evident even for relative large distances $\|\hat{\mathbf{P}}_G- \mathbf{P}_{N}\|_1$. 

\subsection{Analysis on diffusion-based SSL models}
The results presented so far hold for any SSL model even for contemporary GCNs. Next, we focus on the class of diffusion-based SSL models $f_{\text{dif}}$, which have documented success in sizable graphs due to their scalability and robustness~\cite{zhu2003semi,chapelle2006}. Most of these models can be written as
\begin{align}
\label{eq:sslmodel}
    f_{\text{dif}}(\mathcal{L}) =h(\mathbf{A})\mathbf{Y}_{\mathcal{L}}
\end{align}
where  $h$ is e.g. a polynomial function, and
$\mathbf{Y}_{\mathcal{L}}$ is an $N\times C$ matrix whose $n$-th row is $\mathbf{y}_n$ if $n\in\mathcal{L}$, and $\mathbf{0}_{C}$ otherwise. A common choice is $h(\mathbf{A})= \sum_{t=0}^T \alpha_t (\mathbf{D}^{-1/2}\mathbf{A}\mathbf{D}^{-1/2})^t$, where $\alpha_t >0$ and $\mathbf{D}=\text{diag}(\mathbf{A}\mathbf{1})$ denotes the degree matrix. 
\begin{corollary} (Proof in Section C of the supplementary material.) 
Let $f_{\text{dif}}(\mathcal{L})=h(\mathbf{A})\mathbf{Y}_{\mathcal{L}}$ with $h(\mathbf{A})=[\mathbf{h}_1,\ldots,\mathbf{h}_N]$ be the diffusion matrix, and $\mathcal{N}:=
\mathcal{V}-\mathcal{A}$ the set containing the nominal nodes.
The total variation distance between $\mathbf{P}_{G}$ and $\mathbf{P}_{N}$  is 
\begin{align}
    \label{eq:finres1}
     \|\mathbf{P}_G
- \mathbf{P}_{N} \|_1=
p_{fa}\tfrac{f_{\mathcal{A}}}{|\bar{\mathcal{L}}^c_S|}
\bigg{\|}
\tfrac{K}{N-K}\sum_{n\in\mathcal{N}}
    \mathbf{h}_n\mathbf{y}_n^\transpose -\sum_{n'\in\mathcal{A}}
    \mathbf{h}_{n'}\mathbf{y}_{n'}^\transpose \bigg{\|}_1
\end{align}
where $f_{\mathcal{A}}:={N-1 \choose S-1}$  and $|\bar{\mathcal{L}}^c_S|={N\choose S} -{N-K\choose S}$.
\end{corollary}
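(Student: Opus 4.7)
The plan is to specialize Theorem 1 to the diffusion model $f_{\text{dif}}(\mathcal{L})=h(\mathbf{A})\mathbf{Y}_\mathcal{L}$ by computing $\mathbf{P}_A-\mathbf{P}_N$ in closed form. Because the row-masked label matrix $\mathbf{Y}_\mathcal{L}$ vanishes outside $\mathcal{L}$, the output decomposes into a sum of rank-one outer products, $f_{\text{dif}}(\mathcal{L})=\sum_{n\in\mathcal{L}}\mathbf{h}_n\mathbf{y}_n^\transpose$. Linearity of expectation then reduces $\mathbf{P}_N$ and $\mathbf{P}_A$ to weighted sums of $\mathbf{h}_n\mathbf{y}_n^\transpose$ whose weights are the inclusion probabilities of node $n$ under the uniform laws on $\bar{\mathcal{L}}_S$ and $\bar{\mathcal{L}}_S^c$, respectively.

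A direct counting argument provides these probabilities. For $\mathbf{P}_N$, each nominal node is sampled with probability $\binom{N-K-1}{S-1}/\binom{N-K}{S}=S/(N-K)$ and anomalous nodes with probability zero, yielding $\mathbf{P}_N=\tfrac{S}{N-K}\sum_{n\in\mathcal{N}}\mathbf{h}_n\mathbf{y}_n^\transpose$. For $\mathbf{P}_A$, each anomalous node appears in every $S$-set containing it (all such sets are contaminated), giving probability $\binom{N-1}{S-1}/|\bar{\mathcal{L}}_S^c|=f_\mathcal{A}/|\bar{\mathcal{L}}_S^c|$, while each nominal node has probability $\bigl(\binom{N-1}{S-1}-\binom{N-K-1}{S-1}\bigr)/|\bar{\mathcal{L}}_S^c|$ after subtracting off those $S$-sets through $n$ that are entirely nominal.

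Subtracting these two expressions and putting the nominal-node coefficient over a common denominator, the only real algebraic step is the identity $\binom{N-1}{S-1}\binom{N-K}{S}-\binom{N-K-1}{S-1}\binom{N}{S} = -\tfrac{K}{S}\binom{N-1}{S-1}\binom{N-K-1}{S-1}$, which follows immediately from $\binom{N}{S}=\tfrac{N}{S}\binom{N-1}{S-1}$ and $\binom{N-K}{S}=\tfrac{N-K}{S}\binom{N-K-1}{S-1}$. This cancellation lets me pull the common prefactor $f_\mathcal{A}/|\bar{\mathcal{L}}_S^c|$ out of $\mathbf{P}_A-\mathbf{P}_N$, leaving exactly $\tfrac{K}{N-K}\sum_{n\in\mathcal{N}}\mathbf{h}_n\mathbf{y}_n^\transpose-\sum_{n'\in\mathcal{A}}\mathbf{h}_{n'}\mathbf{y}_{n'}^\transpose$ in the bracket (the overall sign is absorbed by $\|\cdot\|_1$). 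Plugging this closed-form expression for $\|\mathbf{P}_A-\mathbf{P}_N\|_1$ into Theorem 1 then yields \eqref{eq:finres1}. The main obstacle is the bookkeeping that makes the $K/(N-K)$ and $f_\mathcal{A}/|\bar{\mathcal{L}}_S^c|$ factors emerge cleanly; beyond that the argument is just linearity of expectation applied to uniform sampling on $\bar{\mathcal{L}}_S$ and $\bar{\mathcal{L}}_S^c$.
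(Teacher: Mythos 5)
Your proposal is correct and follows essentially the same route as the paper's proof: specialize to the diffusion model so that $h(\mathbf{A})$ factors out, reduce $\mathbf{P}_N$ and $\mathbf{P}_A$ to per-node weighted sums of $\mathbf{h}_n\mathbf{y}_n^\transpose$ via counting how often each node appears in $\bar{\mathcal{L}}_S$ and $\bar{\mathcal{L}}_S^c$, and extract the common prefactor $f_{\mathcal{A}}/|\bar{\mathcal{L}}_S^c|$. The one genuine difference is how you obtain the count $f_{\mathcal{N}}$ of contaminated subsets containing a fixed nominal node: you compute it directly by complementation as $\binom{N-1}{S-1}-\binom{N-K-1}{S-1}$, whereas the paper claims this "can not be computed with straightforward counting theory" and instead derives it from the aggregate identity $S|\bar{\mathcal{L}}_S^c| = K f_{\mathcal{A}} + (N-K) f_{\mathcal{N}}$; the two expressions agree, and your inclusion--exclusion route is arguably cleaner, at the cost of the explicit binomial identity you then need to make the $K/(N-K)$ factor emerge (the paper's route makes that cancellation automatic). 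One caveat on your last step: substituting your $\|\mathbf{P}_A-\mathbf{P}_N\|_1 = \tfrac{f_{\mathcal{A}}}{|\bar{\mathcal{L}}_S^c|}\|\cdots\|_1$ into Theorem~1 as stated produces the prefactor $\tfrac{|\bar{\mathcal{L}}_S^c|^2}{|\mathcal{L}_S|}p_{fa}\cdot\tfrac{f_{\mathcal{A}}}{|\bar{\mathcal{L}}_S^c|}$ rather than the $p_{fa}\tfrac{f_{\mathcal{A}}}{|\bar{\mathcal{L}}_S^c|}$ appearing in \eqref{eq:finres1}; this mismatch is already present in the paper (its own supplementary proof stops at the expression for $\|\mathbf{P}_N-\mathbf{P}_A\|_1$ and never completes the substitution), so it is not a gap in your argument.
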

Upon applying the reverse triangle inequality to \eqref{eq:finres1}, Corollary 1 yields  
\begin{align}
     \|\mathbf{P}_N
- \mathbf{P}_{A} \|_1\ge&
\tfrac{f_{\mathcal{A}}}{|\bar{\mathcal{L}}^c_S|}
\bigg|\|\sum_{{n'}\in\mathcal{A}}
    \mathbf{h}_{n'}\mathbf{y}_{n'}^\transpose \|_1-
\tfrac{K}{N-K}\|\sum_{n\in\mathcal{N}}\mathbf{h}_n\mathbf{y}_n^\transpose\|_1\bigg|\nonumber\\
=&\tfrac{f_{\mathcal{A}}}{|\bar{\mathcal{L}}^c_S|}
\bigg|\sum_{{n'}\in\mathcal{A}}
    \|\mathbf{h}_{n'}\|_1-
\tfrac{K}{N-K}\sum_{n\in\mathcal{N}}\|\mathbf{h}_n\|_1\bigg|\label{eq:finineq}
\end{align}
where \eqref{eq:finineq} follows since $\mathbf{y}_n$ has entries either 0 or 1, and $\mathbf{h}_n$ has nonegative entries. Hence, anomalous nodes with large $\|\mathbf{h}_n\|_1$ contribute more to the error norm $\|\mathbf{P}_N-\mathbf{P}_{G} \|_1$. In diffusion-based models, nodes with larger $\|\mathbf{h}_n\|_1$ typically have higher degree. Consequently, as expected anomalous nodes with high degree have a large effect on the learned distributions.

\section{Experiments}
In this section, we compare GraphSAC with state-of-the-art alternatives under different anomaly generation models based on random walks, clustered anomalies, as well as contemporary adversarial attacks for graph data.
\begin{table}[]
    \centering
    \caption{AUC values for detecting adversarial attacks.} 
    \rowcolors[]{1}{white}{gray}
    \begin{tabular}{p{1.4cm} c c c c}
    		\toprule
\texttt{Dataset} & \texttt{Citeseer} & \texttt{Polblog} & \texttt{Cora} &\texttt{Pubmed}\\
   \midrule 
    GraphSAC & \textbf{0.75} &\textbf{0.98}
&\textbf{0.80} &\textbf{0.82}
 \\
    Gae  & 0.64 &0.51 &
0.50 &0.69
\\
    
    Amen & 0.73 &0.89 &
0.75& 0.62
\\
    
    Radar& 0.67&
0.76& 0.77 &0.44
\\
    Degree & 0.58&
0.48& 0.40&0.57
\\
 Cut ratio & 0.49&
0.51&0.35&0.55
\\
Flake & 0.47&
0.61&0.46&0.60
\\
Conductance & 0.35&
0.39& 0.61&0.59
\\
		\bottomrule
    \end{tabular}
    \label{tab:adv}
\end{table}
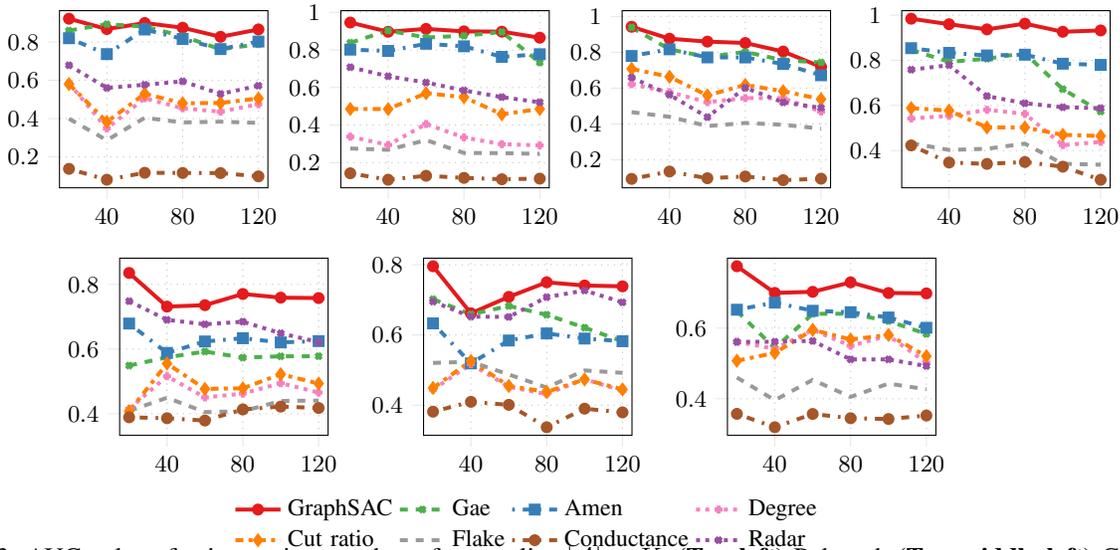
\begin{figure*}[t]
\hspace{1.5cm}
\begin{tikzpicture}

\definecolor{color0}{rgb}{1,0.647058823529412,0}
\definecolor{color1}{rgb}{1,1,0}
\definecolor{color2}{rgb}{0.501960784313725,0,0.501960784313725}

\begin{axis}[width=0.8\mywidth,
height=0.987\myheight,
at={(0\mywidth,0\myheight)},
xtick={40,80,120},
legend cell align={left},
legend style={at={(0.5,0.09)}, anchor=south, draw=white!80.0!black},
tick align=outside,
tick pos=left,grid style={dotted},
xmin=15, xmax=125,
xmajorgrids,
ymajorgrids,ticklabel style={font=\small},
ymin=0.038839533836649, ymax=0.963332207731024
]

\addplot [mark size=\markwidth, line width=\mylinewidth, GraphSAC, mark=*,  mark options={solid}]
table [row sep=\\]{%
20	0.921309813463098 \\
40	0.86735089869281 \\
60	0.898778143876991 \\
80	0.874233001658375 \\
100	0.826955796497081 \\
120	0.865281285072952 \\
};
\addplot [mark size=\markwidth, line width=\mylinewidth, Gae, dashed, mark=x,  mark options={solid}]
table [row sep=\\]{%
20	0.8594768856447689 \\
40	0.892514297385621 \\
60	0.880299286106535 \\
80	0.830182421227197 \\
100	0.761576313594662 \\
120	0.788173400673401 \\
};
\addplot [mark size=\markwidth, line width=\mylinewidth, Amen, dash pattern=on 1pt off 3pt on 3pt off 3pt, mark=square*,  mark options={solid}]
table [row sep=\\]{%
20	0.81948499594485 \\
40	0.736580882352941 \\
60	0.86499176276771 \\
80	0.815573175787728 \\
100	0.76278565471226 \\
120	0.802076318742985 \\
};
\addplot [mark size=\markwidth, line width=\mylinewidth, Avg, dotted, mark=+,  mark options={solid}]
table [row sep=\\]{%
20	0.584772911597729 \\
40	0.348549836601307 \\
60	0.506308347062054 \\
80	0.453560323383085 \\
100	0.436092577147623 \\
120	0.474824635241302 \\
};
\addplot [mark size=\markwidth, line width=\mylinewidth, Cut, dashdotted, mark=diamond*,  mark options={solid}]
table [row sep=\\]{%
20	0.580089213300892 \\
40	0.382843137254902 \\
60	0.529660900604064 \\
80	0.479840381426202 \\
100	0.481743119266055 \\
120	0.505183782267116 \\
};
\addplot [mark size=\markwidth, line width=\mylinewidth, Flake, dashed]
table [row sep=\\]{%
20	0.399391727493917 \\
40	0.287122140522876 \\
60	0.404159802306425 \\
80	0.37988184079602 \\
100	0.38394495412844 \\
120	0.377402497194164 \\
};
\addplot [mark size=\markwidth, line width=\mylinewidth, Conductance, dash pattern=on 1pt off 3pt on 3pt off 3pt, mark=*,  mark options={solid}]
table [row sep=\\]{%
20	0.137631792376318 \\
40	0.0808619281045752 \\
60	0.11722954420648 \\
80	0.116148424543947 \\
100	0.115650542118432 \\
120	0.0984041806958474 \\
};
\addplot [mark size=\markwidth, line width=\mylinewidth, Radar, dotted, mark=x,  mark options={solid}]
table [row sep=\\]{%
20	0.67816301703163 \\
40	0.560028594771242 \\
60	0.57664744645799 \\
80	0.594574004975124 \\
100	0.528865721434529 \\
120	0.571447109988777 \\
};
\end{axis}
\pgfresetboundingbox
\useasboundingbox ($(current axis.south west)+( \marginl, \margin ex)$)
    rectangle (current axis.north east);
\end{tikzpicture}
\hspace{0.7cm}
\begin{tikzpicture}

\definecolor{color0}{rgb}{1,0.647058823529412,0}
\definecolor{color1}{rgb}{1,1,0}
\definecolor{color2}{rgb}{0.501960784313725,0,0.501960784313725}

\begin{axis}[
width=0.8\mywidth,
height=0.987\myheight,
at={(0\mywidth,0\myheight)},
legend cell align={left},
legend style={at={(0.91,0.5)}, anchor=east, draw=white!80.0!black},
tick align=outside,
tick pos=left,grid style={dotted},
xtick={40,80,120},
xmin=15, xmax=125,
xmajorgrids,
ymajorgrids,
ymin=0.0659894518002599,ticklabel style={font=\small}, ymax=1.00926648856289
]
\addlegendimage{no markers, red}
\addlegendimage{no markers, green!50.19607843137255!black}
\addlegendimage{no markers, blue}
\addlegendimage{no markers, color0}
\addlegendimage{no markers, black}
\addlegendimage{no markers, color1}
\addlegendimage{no markers, color2}
\addplot [mark size=\markwidth, line width=\mylinewidth, GraphSAC, mark=*,  mark options={solid}]
table [row sep=\\]{%
20	0.946082652180535 \\
40	0.896937033084312 \\
60	0.912183106950874 \\
80	0.899079590589673 \\
100	0.897362895005097 \\
120	0.86540731292517 \\
};
\addplot [mark size=\markwidth, line width=\mylinewidth, Gae, dashed, mark=x,  mark options={solid}]
table [row sep=\\]{%
20	0.838554094532162 \\
40	0.903186461350816 \\
60	0.866390259619135 \\
80	0.875660326917201 \\
100	0.895974006116208 \\
120	0.733113945578231 \\
};
\addplot [mark size=\markwidth, line width=\mylinewidth, Amen, dash pattern=on 1pt off 3pt on 3pt off 3pt, mark=square*,  mark options={solid}]
table [row sep=\\]{%
20	0.801677920495507 \\
40	0.794903948772679 \\
60	0.832956877109088 \\
80	0.81949791221102 \\
100	0.763238022426096 \\
120	0.777360969387755 \\
};
\addplot [mark size=\markwidth, line width=\mylinewidth, Avg, dotted, mark=+,  mark options={solid}]
table [row sep=\\]{%
20	0.338353556379144 \\
40	0.294167047822331 \\
60	0.405487442980448 \\
80	0.334009955188919 \\
100	0.298185524974516 \\
120	0.291919642857143 \\
};
\addplot [mark size=\markwidth, line width=\mylinewidth, Cut, dashdotted, mark=diamond*,  mark options={solid}]
table [row sep=\\]{%
20	0.485685637406712 \\
40	0.485269604106317 \\
60	0.569639314239202 \\
80	0.548370506161524 \\
100	0.457600407747197 \\
120	0.485476615646259 \\
};
\addplot [mark size=\markwidth, line width=\mylinewidth, Flake, dashed]
table [row sep=\\]{%
20	0.275305884144794 \\
40	0.268209076586878 \\
60	0.319897068050398 \\
80	0.251542926978307 \\
100	0.250425076452599 \\
120	0.246741071428571 \\
};
\addplot [mark size=\markwidth, line width=\mylinewidth, Conductance, dash pattern=on 1pt off 3pt on 3pt off 3pt, mark=*,  mark options={solid}]
table [row sep=\\]{%
20	0.143196933543179 \\
40	0.108865680744016 \\
60	0.129918943209374 \\
80	0.118761457378552 \\
100	0.111438837920489 \\
120	0.114783588435374 \\
};
\addplot [mark size=\markwidth, line width=\mylinewidth, Radar, dotted, mark=x,  mark options={solid}]
table [row sep=\\]{%
20	0.70816301703163 \\
40	0.660028594771242 \\
60	0.62664744645799 \\
80	0.584574004975124 \\
100	0.548865721434529 \\
120	0.521447109988777 \\
};
\end{axis}
\pgfresetboundingbox
\useasboundingbox ($(current axis.south west)+(\marginl, \margin ex)$)
    rectangle (current axis.north east);
\end{tikzpicture}
\hspace{0.7cm}
\begin{tikzpicture}

\definecolor{color0}{rgb}{1,0.647058823529412,0}
\definecolor{color1}{rgb}{1,1,0}
\definecolor{color2}{rgb}{0.501960784313725,0,0.501960784313725}

\begin{axis}[width=0.8\mywidth,
height=0.987\myheight,
at={(0\mywidth,0\myheight)},grid style={dotted},
legend cell align={left},
legend style={at={(0.03,0.03)}, anchor=south west, draw=white!80.0!black},
tick align=outside,
tick pos=left,
xtick={40,80,120},
xmin=15, xmax=125,
ymin=0.0443697454844007, xmajorgrids,ticklabel style={font=\small},
ymajorgrids,
legend columns=4,
legend style={
	at={(0,1.015)}, 
	anchor=south west, legend cell align=left, align=left, draw=none
	, font=\legendfontsize}
]
\addplot [mark size=\markwidth, line width=\mylinewidth, GraphSAC, mark=*,  mark options={solid}]
table [row sep=\\]{%
20	0.94238095238095 \\
40	0.875845341018252 \\
60	0.859695357833656 \\
80	0.851948924731183 \\
100	0.804320197044335 \\
120	0.718526307108692 \\
};
\addplot [mark size=\markwidth, line width=\mylinewidth, Gae, dashed, mark=x,  mark options={solid}]
table [row sep=\\]{%
20	0.938880952380952 \\
40	0.816402497598463 \\
60	0.774036105738233 \\
80	0.802337487781036 \\
100	0.755566502463054 \\
120	0.743509813623619 \\
};
\addplot [mark size=\markwidth, line width=\mylinewidth, Amen, dash pattern=on 1pt off 3pt on 3pt off 3pt, mark=square*,  mark options={solid}]
table [row sep=\\]{%
20	0.7795 \\
40	0.816606628242075 \\
60	0.77153449387492 \\
80	0.771230449657869 \\
100	0.734901477832512 \\
120	0.671911594920007 \\
};
\addplot [mark size=\markwidth, line width=\mylinewidth, Avg, dotted, mark=+,  mark options={solid}]
table [row sep=\\]{%
20	0.622142857142857 \\
40	0.577917867435159 \\
60	0.521921341070277 \\
80	0.54469696969697 \\
100	0.543807881773399 \\
120	0.469157182912749 \\
};
\addplot [mark size=\markwidth, line width=\mylinewidth, Cut, dashdotted, mark=diamond*,  mark options={solid}]
table [row sep=\\]{%
20	0.70752380952381 \\
40	0.663460614793468 \\
60	0.559784010315925 \\
80	0.618212365591398 \\
100	0.583024630541872 \\
120	0.538392709879598 \\
};
\addplot [mark size=\markwidth, line width=\mylinewidth, Flake, dashed]
table [row sep=\\]{%
20	0.466261904761905 \\
40	0.44026176753122 \\
60	0.388733075435203 \\
80	0.405687927663734 \\
100	0.395862068965517 \\
120	0.374307273626917 \\
};
\addplot [mark size=\markwidth, line width=\mylinewidth, Conductance, dash pattern=on 1pt off 3pt on 3pt off 3pt, mark=*,  mark options={solid}]
table [row sep=\\]{%
20	0.0939285714285714 \\
40	0.135038424591739 \\
60	0.0976950354609929 \\
80	0.107258064516129 \\
100	0.0869655172413793 \\
120	0.0951179284182748 \\
};
\addplot [mark size=\markwidth, line width=\mylinewidth, Radar, dotted, mark=x,  mark options={solid}]
table [row sep=\\]{%
20	0.65947619047619 \\
40	0.563436599423631 \\
60	0.438483236621534 \\
80	0.602260508308895 \\
100	0.520793103448276 \\
120	0.490545109681676 \\
};
\end{axis}
\pgfresetboundingbox
\useasboundingbox ($(current axis.south west)+( \marginl,\margin ex)$)
    rectangle (current axis.north east);
\end{tikzpicture}
\hspace{0.8cm}
\begin{tikzpicture}

\definecolor{color0}{rgb}{1,0.647058823529412,0}
\definecolor{color1}{rgb}{1,1,0}
\definecolor{color2}{rgb}{0.501960784313725,0,0.501960784313725}

\begin{axis}[width=0.8\mywidth,
height=0.987\myheight,
at={(0\mywidth,0\myheight)},
tick align=outside,
tick pos=left,
grid style={dotted},
xmin=15, xmax=125,
xtick={40,80,120},ticklabel style={font=\small},
xmajorgrids,
ymajorgrids,
ymin=0.235012802068754, ymax=1.02008615053931,
legend columns=4,
legend style={
	at={(0,1.015)}, 
	anchor=south west, legend cell align=left, align=left, draw=none
	, font=\legendfontsize}
]
\addplot [mark size=\markwidth, line width=\mylinewidth, GraphSAC, mark=*,  mark options={solid}]
table [row sep=\\]{%
20	0.984400998336107 \\
40	0.959932375316991 \\
60	0.937138769670959 \\
80	0.962510926573427 \\
100	0.926530612244898 \\
120	0.932460890493381 \\
};
\addplot [mark size=\markwidth, line width=\mylinewidth, Gae, dashed, mark=x,  mark options={solid}]
table [row sep=\\]{%
20	0.846555740432612 \\
40	0.79302620456467 \\
60	0.807138769670959 \\
80	0.831899038461539 \\
100	0.672200532386868 \\
120	0.57314229843562 \\
};
\addplot [mark size=\markwidth, line width=\mylinewidth, Amen, dash pattern=on 1pt off 3pt on 3pt off 3pt, mark=square*,  mark options={solid}]
table [row sep=\\]{%
20	0.853818635607321 \\
40	0.833136094674556 \\
60	0.82173104434907 \\
80	0.824868881118881 \\
100	0.784977817213842 \\
120	0.779836040914561 \\
};
\addplot [mark size=\markwidth, line width=\mylinewidth, Avg, dotted, mark=+,  mark options={solid}]
table [row sep=\\]{%
20	0.542221297836938 \\
40	0.55363482671175 \\
60	0.580472103004292 \\
80	0.562871503496504 \\
100	0.424046140195209 \\
120	0.438334837545126 \\
};
\addplot [mark size=\markwidth, line width=\mylinewidth, Cut, dashdotted, mark=diamond*,  mark options={solid}]
table [row sep=\\]{%
20	0.588144758735441 \\
40	0.576901944209636 \\
60	0.50241773962804 \\
80	0.502408216783217 \\
100	0.469618456078083 \\
120	0.464538206979543 \\
};
\addplot  [mark size=\markwidth, line width=\mylinewidth, Flake, dashed]
table [row sep=\\]{%
20	0.432861896838602 \\
40	0.40259932375317 \\
60	0.408011444921316 \\
80	0.430594405594406 \\
100	0.341330967169476 \\
120	0.337379663056558 \\
};
\addplot [mark size=\markwidth, line width=\mylinewidth, Conductance, dash pattern=on 1pt off 3pt on 3pt off 3pt, mark=*,  mark options={solid}]
table [row sep=\\]{%
20	0.42283693843594 \\
40	0.346999154691462 \\
60	0.340886981402003 \\
80	0.349147727272727 \\
100	0.328473824312334 \\
120	0.270697954271962 \\
};
\addplot [mark size=\markwidth, line width=\mylinewidth, Radar, dotted, mark=x,  mark options={solid}]
table [row sep=\\]{%
20	0.758552412645591 \\
40	0.778550295857988 \\
60	0.64134477825465 \\
80	0.609709353146853 \\
100	0.592200532386868 \\
120	0.58789861612515 \\
};
\end{axis}
\pgfresetboundingbox
\useasboundingbox ($(current axis.south west)+( \marginl, \margin ex)$)
    rectangle (current axis.north east);
\end{tikzpicture}

\vspace{1cm}\hspace{2.3cm}
\begin{tikzpicture}

\definecolor{color0}{rgb}{1,0.647058823529412,0}
\definecolor{color1}{rgb}{1,1,0}
\definecolor{color2}{rgb}{0.501960784313725,0,0.501960784313725}

\begin{axis}[width=0.8\mywidth,
height=0.987\myheight,
at={(0\mywidth,0\myheight)},
legend style={draw=white!80.0!black},
tick align=outside,
tick pos=left,
x grid style={white!69.01960784313725!black},
xmin=15, xmax=125,
xtick={40,80,120},
xmajorgrids,
ymajorgrids,ticklabel style={font=\small},
grid style={dotted}]
\addlegendimage{no markers, GraphSAC}
\addlegendimage{no markers,Gae}
\addlegendimage{no markers, Amen}
\addlegendimage{no markers, color0}
\addlegendimage{no markers, black}
\addlegendimage{no markers, color1}
\addlegendimage{no markers, color2}
\addplot [line width=\mylinewidth, GraphSAC, mark=*, mark size=\markwidth, mark options={solid}]
table [row sep=\\]{%
20	0.835347842984843 \\
40	0.731296242211838 \\
60	0.735687115966059 \\
80	0.770378469507428 \\
100	0.758977673325499 \\
120	0.757670296608784 \\
};
\addplot [mark size=\markwidth, line width=\mylinewidth, Gae, dashed, mark=x,  mark options={solid}]
table [row sep=\\]{%
20	0.548955499417023 \\
40	0.574435358255452 \\
60	0.592241295230664 \\
80	0.573508356137608 \\
100	0.5775910693302 \\
120	0.577784590732202 \\
};
\addplot [mark size=\markwidth, line width=\mylinewidth, Amen, dash pattern=on 1pt off 3pt on 3pt off 3pt, mark=square*,  mark options={solid}]
table [row sep=\\]{%
20	0.678896230081617 \\
40	0.58808167834891 \\
60	0.623498813355441 \\
80	0.633448983580923 \\
100	0.620630630630631 \\
120	0.624601850943458 \\
};
\addplot [mark size=\markwidth, line width=\mylinewidth, Avg, dotted, mark=+,  mark options={solid}]
table [row sep=\\]{%
20	0.415225417800233 \\
40	0.515960864485981 \\
60	0.450523424038493 \\
80	0.462242230258014 \\
100	0.494510379945163 \\
120	0.465598122894797 \\
};
\addplot [mark size=\markwidth, line width=\mylinewidth, Cut, dashdotted, mark=diamond*,  mark options={solid}]
table [row sep=\\]{%
20	0.407340652934318 \\
40	0.555257009345794 \\
60	0.476787281771189 \\
80	0.478600224784988 \\
100	0.522335487661575 \\
120	0.49340233493574 \\
};
\addplot [mark size=\markwidth, line width=\mylinewidth, Flake, dashed]
table [row sep=\\]{%
20	0.408715507190051 \\
40	0.449549746884735 \\
60	0.40513995903638 \\
80	0.407693999218139 \\
100	0.439522130826479 \\
120	0.440753458255666 \\
};
\addplot [mark size=\markwidth, line width=\mylinewidth, Conductance, dash pattern=on 1pt off 3pt on 3pt off 3pt, mark=*,  mark options={solid}]
table [row sep=\\]{%
20	0.389035172949864 \\
40	0.386129770249221 \\
60	0.378988263597646 \\
80	0.413375928459734 \\
100	0.422128867998433 \\
120	0.417816965891625 \\
};

\addplot [mark size=\markwidth, line width=\mylinewidth, Radar, dotted, mark=x,  mark options={solid}]
table [row sep=\\]{%
20	0.74816301703163 \\
40	0.69028594771242 \\
60	0.67664744645799 \\
80	0.684574004975124 \\
100	0.648865721434529 \\
120	0.621447109988777 \\
};
\end{axis}
\pgfresetboundingbox
\useasboundingbox ($(current axis.south west)+( \marginl,\margin ex)$)
    rectangle (current axis.north east);
\end{tikzpicture}
\hspace{1cm}
\begin{tikzpicture}

\definecolor{color0}{rgb}{1,0.647058823529412,0}
\definecolor{color1}{rgb}{1,1,0}
\definecolor{color2}{rgb}{0.501960784313725,0,0.501960784313725}

\begin{axis}[width=0.8\mywidth,
height=0.987\myheight,
at={(0\mywidth,0\myheight)},
xtick={40,80,120},
legend cell align={left},
legend columns=4,ticklabel style={font=\small},
legend style={
	at={(0.5,-0.3)},anchor=north, legend cell align=left, align=left, draw=none
	, font=\legendfontsize}
,
tick align=outside,
tick pos=left,grid style={dotted},
xmin=15, xmax=125,
xmajorgrids,
ymajorgrids,
legend entries={{GraphSAC},{Gae},{Amen},{Degree},{Cut ratio},{Flake},{Conductance},{Radar}},
ymin=0.314232279058563, ymax=0.818619454485536
]
\addplot [mark size=\markwidth, line width=\mylinewidth, GraphSAC, mark=*,  mark options={solid}]
table [row sep=\\]{%
20	0.795692764693401 \\
40	0.661599914784832 \\
60	0.708533019540722 \\
80	0.749965091299678 \\
100	0.740882797323549 \\
120	0.738450292397661 \\
};
\addplot [mark size=\markwidth, line width=\mylinewidth, Gae, dashed, mark=x,  mark options={solid}]
table [row sep=\\]{%
20	0.701888393804371 \\
40	0.658819769919046 \\
60	0.682481101126801 \\
80	0.65750268528464 \\
100	0.620748974746385 \\
120	0.581799148075951 \\
};
\addplot [mark size=\markwidth, line width=\mylinewidth, Amen, dash pattern=on 1pt off 3pt on 3pt off 3pt, mark=square*,  mark options={solid}]
table [row sep=\\]{%
20	0.633004455760662 \\
40	0.519029612270984 \\
60	0.584239052916845 \\
80	0.6043716433942 \\
100	0.589764731275631 \\
120	0.58209154573677 \\
};
\addplot [mark size=\markwidth, line width=\mylinewidth, Avg, dotted, mark=+,  mark options={solid}]
table [row sep=\\]{%
20	0.444313600678973 \\
40	0.516947166595654 \\
60	0.452242904007988 \\
80	0.431452738990333 \\
100	0.474278005611915 \\
120	0.440780449065049 \\
};
\addplot [mark size=\markwidth, line width=\mylinewidth, Cut, dashdotted, mark=diamond*,  mark options={solid}]
table [row sep=\\]{%
20	0.449087629959686 \\
40	0.525974648487431 \\
60	0.454714020824419 \\
80	0.4378141783029 \\
100	0.473369307144399 \\
120	0.444998556060934 \\
};
\addplot [mark size=\markwidth, line width=\mylinewidth, Flake, dashed]
table [row sep=\\]{%
20	0.520411627413537 \\
40	0.523679164891351 \\
60	0.487412637284268 \\
80	0.450209452201933 \\
100	0.498981221670624 \\
120	0.492061944985922 \\
};
\addplot [mark size=\markwidth, line width=\mylinewidth, Conductance, dash pattern=on 1pt off 3pt on 3pt off 3pt, mark=*,  mark options={solid}]
table [row sep=\\]{%
20	0.381476766390834 \\
40	0.409485513421389 \\
60	0.400962772785623 \\
80	0.337158968850698 \\
100	0.390062594431254 \\
120	0.379487040646885 \\
};
\addplot [mark size=\markwidth, line width=\mylinewidth, Radar, dotted, mark=x,  mark options={solid}]
table [row sep=\\]{%
20	0.694971355824316 \\
40	0.65219429058372 \\
60	0.65132648694908 \\
80	0.707696025778733 \\
100	0.726647960284913 \\
120	0.692877770558082 \\
};
\end{axis}
\pgfresetboundingbox
\useasboundingbox ($(current axis.south west)+(\marginl, \margin ex)$)
    rectangle (current axis.north east);
\end{tikzpicture}
\hspace{1cm}
\begin{tikzpicture}

\definecolor{color0}{rgb}{1,0.647058823529412,0}
\definecolor{color1}{rgb}{1,1,0}
\definecolor{color2}{rgb}{0.501960784313725,0,0.501960784313725}

\begin{axis}[
width=0.8\mywidth,
height=0.987\myheight,
at={(0\mywidth,0\myheight)},
legend cell align={left},
legend style={at={(0.91,0.5)}, anchor=east, draw=white!80.0!black},
tick align=outside,ticklabel style={font=\small},
tick pos=left,grid style={dotted},
xtick={40,80,120},
xmin=15, xmax=125,
xmajorgrids,
ymajorgrids,
ymin=0.296631638063005, ymax=0.797984486068975
]

\addplot [mark size=\markwidth, line width=\mylinewidth, GraphSAC, mark=*,  mark options={solid}]
table [row sep=\\]{%
20	0.775195720250522 \\
40	0.699725937581956 \\
60	0.702781439493804 \\
80	0.729555393586006 \\
100	0.699515054622968 \\
120	0.698331548893647 \\
};
\addplot [mark size=\markwidth, line width=\mylinewidth, Gae, dashed, mark=x,  mark options={solid}]
table [row sep=\\]{%
20	0.64830375782881 \\
40	0.543214004720692 \\
60	0.640447315229809 \\
80	0.640932282003711 \\
100	0.620151878497202 \\
120	0.583485010706638 \\
};
\addplot [mark size=\markwidth, line width=\mylinewidth, Amen, dash pattern=on 1pt off 3pt on 3pt off 3pt, mark=square*,  mark options={solid}]
table [row sep=\\]{%
20	0.652100730688935 \\
40	0.672233149750852 \\
60	0.649178310923631 \\
80	0.645020540683806 \\
100	0.630130562216893 \\
120	0.600912294789436 \\
};
\addplot [mark size=\markwidth, line width=\mylinewidth, Avg, dotted, mark=+,  mark options={solid}]
table [row sep=\\]{%
20	0.56143006263048 \\
40	0.541306058221872 \\
60	0.594955620001758 \\
80	0.549847601378214 \\
100	0.578824940047962 \\
120	0.502105638829408 \\
};
\addplot [mark size=\markwidth, line width=\mylinewidth, Cut, dashdotted, mark=diamond*,  mark options={solid}]
table [row sep=\\]{%
20	0.507625260960334 \\
40	0.529727248885392 \\
60	0.595184111081817 \\
80	0.56786045587066 \\
100	0.580724753530509 \\
120	0.520543361884368 \\
};
\addplot [mark size=\markwidth, line width=\mylinewidth, Flake, dashed]
table [row sep=\\]{%
20	0.460294885177453 \\
40	0.396000524521374 \\
60	0.453510853326303 \\
80	0.405386959978797 \\
100	0.442933653077538 \\
120	0.426911581013562 \\
};
\addplot [mark size=\markwidth, line width=\mylinewidth, Conductance, dash pattern=on 1pt off 3pt on 3pt off 3pt, mark=*,  mark options={solid}]
table [row sep=\\]{%
20	0.357346033402923 \\
40	0.319420403881458 \\
60	0.357043676948765 \\
80	0.344950967399947 \\
100	0.342749800159872 \\
120	0.352600820842256 \\
};
\addplot [mark size=\markwidth, line width=\mylinewidth, Radar, dotted, mark=x,  mark options={solid}]
table [row sep=\\]{%
20	0.561007306889353 \\
40	0.561498819826908 \\
60	0.563727041040513 \\
80	0.511423270606944 \\
100	0.511502797761791 \\
120	0.492643647394718 \\
};
\end{axis}
\pgfresetboundingbox
\useasboundingbox ($(current axis.south west)+( \marginl, \margin ex)$)
    rectangle (current axis.north east);
\end{tikzpicture}

\vspace{1.3cm}
    \caption{AUC values for increasing number of anomalies $|\mathcal{A}|=K$. \textbf{(Top left)} Pubmed,
    \textbf{(Top middle left)} Cora, \textbf{(Top middle right)} Citeseer, \textbf{(Top right)} Polblogs,
    \textbf{(Bottom left)} Blogcat, \textbf{(Bottom middle)} Wikipedia \textbf{(Bottom right)} PPI. 
    }
    \label{fig:labpert}
\end{figure*}

\noindent\textbf{Baselines}. Amen identifies anomalies by evaluating the attribute correlation of nodes per egonet of the graph~\cite{perozzi2016scalable}. The graph neural network encoder (GAE) ranks anomalies by measuring the reconstruction error recovered at the decoder 
output~\cite{ding2019deep}. Radar asserts nodes as anomalous if they do not adhere to the proposed parametric model~\cite{li2017radar}. Following the experimental setup in \cite{perozzi2016scalable}, additional anomaly detection methods are considered that only utilize graph connectivity but not nodal labels, namely approaches based on the Average degree~\cite{charikar2000greedy}, Cut ratio~\cite{fortunato2010community}, Flake~\cite{flake2000efficient}, and Conductance~\cite{andersen2006local}. 
Unless stated otherwise, GraphSAC  is configured with $T=0.5$, $I=50$ and the personalized PageRank (PPR)~\cite{bahmani2010fast} as the SSL model.
The different methods are evaluated using the area under the curve (AUC) of the receiver operating characteristic (ROC) curve. The ROC curve  plots the rate an anomaly is detected (true positive) against the rate a node is miss-classified as anomalous (false positive). The AUC value represents the probability that a randomly chosen abnormal node is ranked higher than a normal node.

\noindent\textbf{Datasets}. The 7 benchmark labeled graphs are Cora ($N=2708,C=7$), Citeseer ($N=3327, C=6$), Pubmed ($N=19717,C=3$), Polblogs ($N=1224,C=2$), Blogcat ($N=10312,C=39$), PPI ($N=3890,C=50$), and Wikipedia ($N=4733, C=39$). The nodes in the last three graphs are multilabel ones.\footnote{For graphs with multilabeled nodes, clustered anomalies and adversarial attacks are not defined and hence, these graphs are not included in the respecitive experiments.}  

\subsection{Adversarial attacks}
We generated anomalies using the adversarial setup in~\cite{zugner18adv}, where attacks are effected on attributed graphs targeted for GCNs. We focus on structural attacks, which means that edges adjacent to the targeted node are added or removed; that is, we select a random subset of targeted nodes $\mathcal{A}$, and alter their connectivity by a sequence of structural attacks~\cite{zugner18adv}. 

Table \ref{tab:adv} reports the AUC values for competing state-of-the-art techniques in detecting adversarial attacks with $K$=10 targeted nodes. As GAE relies on a deep graph autonencoder~\cite{ding2019deep}, it is maximally affected by the adversarial attacks. Our novel method outperforms all alternatives in detecting the  attacked nodes. These promising results suggest that GraphSAC can be effectively employed as a preprocessing step to flag adversarial input to a graph neural network.

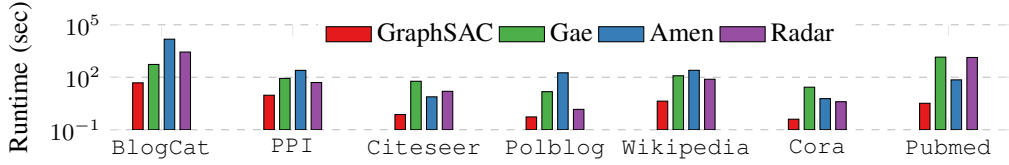
\begin{figure*}[t]
\hspace{2cm}
%
%
%
\begin{tikzpicture}

\begin{axis}[%
axis line style={draw=none},
width=2.2\mywidth,
height=0.35\myheight,
at={(0\mywidth,0\myheight)},
scale only axis,
bar shift auto,
log origin=infty,
xmin=0.509090909090909,ticklabel style={font=\small},
xmax=9.090909090909,
xtick={1,2.25,3.5,4.75,6,7.25,8.5},
xticklabels={{\texttt{BlogCat}},{\texttt{PPI}},{\texttt{Citeseer}},{\texttt{Polblog}}, {\texttt{Wikipedia}},{\texttt{Cora}},{\texttt{Pubmed}}},
ymode=log,
ymin=0.1,
ymax=100000,
yminorticks=true,label style={font=\small},
ylabel={Runtime (sec) },
ylabel near ticks,
axis background/.style={fill=white},
ymajorgrids,
 legend columns=4,
legend style={at={(0.23,0.70)}, anchor=south west, legend cell align=left, align=left, draw=none},grid style={dashed}
]

\addplot[ybar, bar width=0.1, fill=GraphSAC, draw=black, area legend] table[row sep=crcr] {%
	1	48\\
	2.25 9.39 \\
	3.5 0.74\\
	4.75 0.54 \\
	6 4.32\\
	7.25 	0.40\\
	8.5 	3.2497780323028564 \\
};
\addlegendentry{GraphSAC}

\addplot[ybar, bar width=0.1, fill=Gae, draw=black, area legend] table[row sep=crcr] {%
	1	540\\
	2.25 86\\
	3.5 59\\
	4.75 15\\
	6 122\\
	7.25 27\\
	8.5 1412.8371877670288\\
};
\addlegendentry{Gae}

\addplot[ybar, bar width=0.1, fill=Amen, draw=black, area legend] table[row sep=crcr] {%
1	14993\\
2.25 248\\
3.5 7.6\\
4.75 180\\
6 250\\
7.25 6\\
8.5 71\\
};
\addlegendentry{Amen}
\addplot[ybar, bar width=0.1, fill=Radar, draw=black, area legend] table[row sep=crcr] {%
	1	2761\\
	2.25 50\\
	3.5 15.72\\
	4.75 1.46\\
	6 77\\
	7.25 4\\
	8.5 1353\\
};
\addlegendentry{Radar}

\end{axis}
\end{tikzpicture}%
\vspace{-0.4cm}
	\caption{Runtime comparisons across different graphs for Fig.~\ref{fig:labpert}.}\label{fig:runtimemain}
\end{figure*}
\subsection{Random walk-based anomalies}
We test the algorithms in identifying homophilic random-walk based  anomalies. 
To generate these we select a subset of $|\mathcal{A}|$ nodes at random, and alter their labels. For each $n \in \mathcal{A}$, we perform a random walk of length $k=10$, and replace $y_n$ with the label of the landing node. Hence, we modify the labels of the targeted nodes in $\mathcal{A}$ as prescribed by the random walk model. The resulting nodes violate the homophily property. 
    
Fig. \ref{fig:labpert} plots the AUC values of various methods with increasing $K$ on 6 benchmark graphs.  Evidently, GraphSAC outperforms alternatives while the performance of all methods degrades slightly as $K$ increases. Furthermore, Fig.~\ref{fig:runtimemain} reports the runtime of the best performing algorithms for $K=20$ in Fig.~\ref{fig:labpert} across all graphs. Evidently, GraphSAC is significantly faster than these competing methods. For further details on these  runtime comparisons see Section E of the supplementary material.

\subsection{Clustered anomalies}
Next, we consider indentifying clusters of anomalous nodes Fig.\ref{fig:anom_clust}.  Towards generating clustered anomalies, we select a cluster of connected nodes  $\mathcal{A}$ using \cite{cordasco2010community}, and set all labels as $y_n=c, \forall n \in \mathcal{A}$, where $c$ is the least common label in $\mathcal{A}$. The anomalous nodes satisfy the homophily property inside the corrupted cluster that further challenges their detection.

Table \ref{tab:clust_anom} reports the AUC values of anomaly detection algorithms for identifying clustered anomalies. The performance of competing algorithms is heavily affected since within the affected cluster the anomalous nodes appear as nominal. On the other hand, our novel approach utilizes random sampling and consensus strategies, and markedly outperforms the baseline schemes in identifying the ground-truth anomalies. 
\begin{table}[t!]
    \centering
        \caption{AUC values for discovering clusters of anomalous nodes.} 
    \rowcolors[]{1}{white}{gray}
    \begin{tabular}{p{1.4cm} c c c c}
    \toprule
    \texttt{Dataset} & 
    \texttt{Citeseer}
    & \texttt{Polblog} 
    & \texttt{Cora} 
    & \texttt{Pubmed} 
    \\
    \midrule 
GraphSAC & \textbf{0.88} &\textbf{0.74}
&\textbf{0.97}&\textbf{0.96}
 \\
    Gae  & 0.50 &0.30 &
0.58&0.95
\\
    Amen & 0.51 &0.10 &
0.48&0.42
\\
    Radar & 0.86&
0.40& 0.88&0.88
\\
    Degree & 0.46&
0.10& 0.45&0.29
\\
Cut ratio & 0.49&
0.08&0.40& 0.27
\\
Flake & 0.40&
0.05&0.37&0.33
\\
Conductance & 0.55&
0.55& 0.40&0.37
\\
		\bottomrule
    \end{tabular}
    \label{tab:clust_anom}
\end{table}

\noindent\textbf{Parameter sensitivity}. Sensitivity of GraphSAC to $I,T$ is reported in Fig.~\ref{fig:sens}. GraphSAC's AUC performance is stable around the preselected values. As deduced in the discussion after Theorem 2, the number of iterations $I$ for a reliable anomaly score ranking is relatively small. A large enough ratio of correctly classified nodes $T$, aides GraphSAC in discarding contaminated samples since these will result in a large number of miss-classified nodes.  

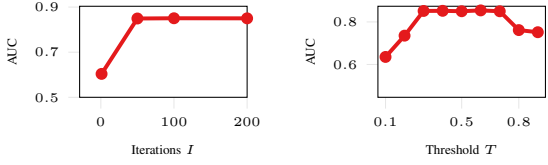
\begin{figure}
    \hspace{-0.2cm}
\begin{tikzpicture}

\begin{axis}[
width=0.7\mywidth,
height=0.7\myheight,
at={(0\mywidth,0\myheight)},grid style={dotted},
legend cell align={left},
legend style={draw=white!80.0!black},
tick align=outside,grid style={dotted},
tick pos=left, ticklabel style={font=\tiny},
x grid style={white!69.01960784313725!black},
xlabel={Iterations $I$},label style={font=\tiny},
xmin=-29, xmax=200,
y grid style={white!69.01960784313725!black},
ylabel={AUC},
ytick={0.5,0.7,0.9},
ymin=0.5, ymax=0.903514479472141]
\addlegendimage{no markers, red}
\addplot [line width=\mylinewidth, GraphSAC, mark=*, mark size=\markwidth, mark options={solid}]
table [row sep=\\]{%
1   0.6038
20	0.850177174975562 \\
50	0.849303519061584 \\
100	0.850384897360704 \\
200	0.85 \\
};
\end{axis}

\end{tikzpicture}~
\begin{tikzpicture}

\begin{axis}[
width=0.7\mywidth,
height=0.7\myheight,
at={(0\mywidth,0\myheight)},grid style={dotted},
legend cell align={left},
legend style={draw=white!80.0!black},
tick align=outside,label style={font=\tiny},
tick pos=left,grid style={dotted},
x grid style={white!69.01960784313725!black},
xlabel={Threshold $T$},
xmin=0.06, xmax=0.94,ticklabel style={font=\tiny},
y grid style={white!69.01960784313725!black},
ylabel={AUC},
xtick={0.1,0.5,0.8},
ytick={0.4,0.6,0.8},
ymin=0.445712060117302, ymax=0.873514479472141
]
\addlegendimage{no markers, red}
\addplot [line width=\mylinewidth, GraphSAC, mark=*, mark size=\markwidth, mark options={solid}]
table [row sep=\\]{%
0.1	0.6354 \\
0.2	0.7354 \\
0.3	0.851447947214076 \\
0.4	0.852052785923754 \\
0.5	0.850103861192571 \\
0.6	0.854068914956012 \\
0.7	0.850006109481916 \\
0.8	0.76157624633431 \\
0.9	0.75157624633431 \\
};
\end{axis}

\end{tikzpicture}         

    \caption{GraphSAC's sensitivity to $I,T$  for the Citeseer dataset with $K=80$ in Fig.~\ref{fig:labpert}.}
    \label{fig:sens} 
\end{figure}

\noindent\textbf{Additional SSL methods}. Fig.~\ref{fig:pol} reports the AUC for  GraphSAC when using the personalized page-rank (ppr)~\cite{bahmani2010fast} and heat-kernel (hk)~\cite{kloster2014heat} SSL methods for the experiment in Fig.~\ref{fig:labpert}. Deep SSL methods~\cite{kipf2016semi} for GraphSAC  will be part of our future research.
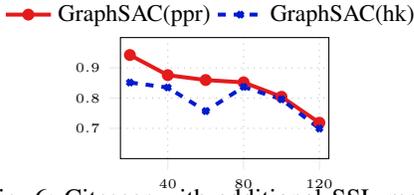
\begin{figure}
   \centering\begin{tikzpicture}

\definecolor{color0}{rgb}{1,0.647058823529412,0}
\definecolor{color1}{rgb}{1,1,0}
\definecolor{color2}{rgb}{0.501960784313725,0,0.501960784313725}

\begin{axis}[width=0.8\mywidth,
height=0.8\myheight,
at={(0\mywidth,0\myheight)},grid style={dotted},
legend cell align={left},
legend entries={{GraphSAC(ppr)},{GraphSAC(hk)}},
legend style={at={(0.03,0.03)}, anchor=south west, draw=white!80.0!black},
tick align=outside,
tick pos=left,
xtick={40,80,120},
ytick={0.7,0.8,0.9},
xmin=15, xmax=125,
ymin=0.6,ymax=1,
xmajorgrids,
ymajorgrids,
legend columns=2,
legend style={
	at={(-0.6,1.015)}, 
	anchor=south west, legend cell align=left, align=left, draw=none
	, font=\small}
]\pgfplotsset{every tick label/.append style={font=\tiny}}
\addplot [mark size=\markwidth, line width=\mylinewidth, GraphSAC, mark=*,  mark options={solid}]
table [row sep=\\]{%
20	0.94238095238095 \\
40	0.875845341018252 \\
60	0.859695357833656 \\
80	0.851948924731183 \\
100	0.804320197044335 \\
120	0.718526307108692 \\
};
\addplot [mark size=\markwidth, line width=\mylinewidth,GraphSAChk, dashed, mark=x, mark options={solid}]
table [row sep=\\]{%
20	0.851452380952381 \\
40	0.834990393852066 \\
60	0.756971308833011 \\
80	0.836986803519062 \\
100	0.795960591133005 \\
120	0.699323767111991 \\
};
\end{axis}
\pgfresetboundingbox
\useasboundingbox ($(current axis.south west)+( \marginl,\margin ex)$)
    rectangle (current axis.north east);
\end{tikzpicture}\vspace{0.2cm}
    \caption{Citeseer with additional SSL methods.}
    \label{fig:pol}
\end{figure}

\noindent\textbf{On the practical interpretation of Theorem 1}. 
Fig. \ref{fig:auc2d} reports the AUC performance of GraphSAC for Citeseer in Fig.~\ref{fig:labpert} with varying number of anomalies $K$ and sampling size $S$, where darker boxes indicate larger values of AUC. Observe that the AUC increases for larger $S$. For larger $K$ the AUC drops as expected, but this can be rectified by increasing $S$. 
Fig. \ref{fig:maxval} demonstrates the maximum number of anomalies $K_m:=\max_i K^{(i)}$ that existed in the contaminated samples that were missclasiffied as nominal $\delta(f(\mathcal{L}^{(i)}))=1$, where $K^{(i)}:=|\mathcal{L}^{(i)}\cup\mathcal{A}|$. Notice $K_m$ in Fig. \ref{fig:maxval} is significantly smaller than the total number of anomalies that is $K=17,33,83$, respectively. 
Hence, even though there are contaminated samples, the number anomalies in $\mathcal{L}$ is so small that does not affect the AUC values in~Fig. \ref{fig:auc2d}. Motivated by this result, we derive  a stronger version of Theorem 1 that accounts for $K_m$ and is included in Section D of the supplementary material.

\begin{figure}[h]
    \centering
    \begin{subfigure}[b]{0.24\textwidth}\centering
        \includegraphics[width=0.6\textwidth]{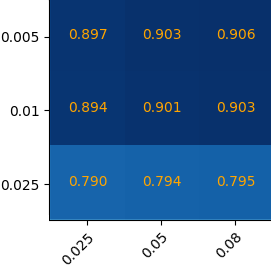}
        \vspace{-0.2cm}
        \caption{AUC values for detecting the anomalous nodes.}
        \label{fig:auc2d}
    \end{subfigure}
    \begin{subfigure}[b]{0.24\textwidth}\centering
        \includegraphics[width=0.6\textwidth]{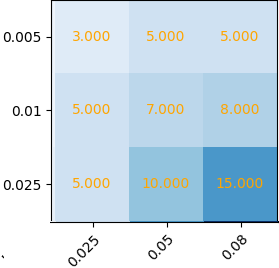}
                \vspace{-0.2cm}
        \caption{Maximum number of anomalies in missed $L^{(i)}$}
        \label{fig:maxval}
    \end{subfigure}
            \vspace{-0.2cm}
    \caption{X-axis denotes the fraction of samples $S/N$ and y-axis represents the fraction of anomalies $K/N$. }
    \label{fig:2d}\vspace{-0.6cm}
\end{figure}

\section{Conclusion}
We introduced a graph-based random sampling and consensus approach to effectively  detect  anomalous nodes in large-scale graphs. Rigorous analysis provides performance guarantees for our novel algorithm, by bounding the number of random draws involved. GraphSAC outperforms competing algorithms in detecting random walk-based anomalies, clustered anomalies, as well as contemporary adversarial attacks for graph data. Our future research will leverage GraphSAC to guard semi-supervised learning algorithms from adversarial attacks.
\small    
\bibliographystyle{IEEEtran}
\bibliography{my_bibliography}
\noindent
\appendix
\onecolumn
{\centering\huge \textbf{GraphSAC: Detecting anomalies in large-scale graphs}

~~~~~~~~~~~~~~~~~~~~~~~~~~~~~~~~~~~~~~~~~~\textbf{Supplementary material}}
\section{Proof of Theorem 1}
First, a link between the GraphSAC sampling scheme \eqref{eq:grsamplalt} and the GraphSAC filter \eqref{eq:graphsacdetect} is established.\footnote{Equations ($k$) with $k\le21$ correspond to the orginal manuscript while ($k$) with $k>21$ correspond to the supplementary material).}
Using the law of total probability (l.t.p), the total probability of sampling any contaminated subset is \begin{align}
\label{eq:sampltp}
\sum_{\mathcal{L}\in \bar{\mathcal{L}}_S^c} p_G(\mathcal{L})= |\bar{\mathcal{L}}_S^c|f.
\end{align}
This probability coincides with \eqref{eq:graphsacdetect}, the probability of declaring all contaminated samples as clean. Using the l.t.p. and the uniform sampling strategy of \eqref{eq:samplgrsac} it holds that
\begin{align}
\label{eq:detltp}
\sum_{\mathcal{L}\in \bar{\mathcal{L}}_S^c}p(\delta_f(\mathcal{L})=1|\mathcal{L}\in \bar{\mathcal{L}}^c_S)p(\mathcal{L}\in \bar{\mathcal{L}}^c_S)=|\bar{\mathcal{L}}_S^c|p(\delta_f(\mathcal{L})=1|\mathcal{L}\in \bar{\mathcal{L}}^c_S)p(\mathcal{L}\in \bar{\mathcal{L}}^c_S)
\end{align}
where $p(\mathcal{L}\in \bar{\mathcal{L}}^c_S)=|\bar{\mathcal{L}}^c_S|/|{\mathcal{L}}_S|$ due to the uniform sampling \eqref{eq:samplgrsac}. By introducing the probability of false alarm for   $\eqref{eq:graphsacdetect}$ as  $ p_{fa}:=p(\delta_f(\mathcal{L})=1|\mathcal{L}\in \bar{\mathcal{L}}^c_S)$ and equating the right side in \eqref{eq:sampltp} and  \eqref{eq:detltp} it holds that
\begin{align}
\label{eq:fa_def}
p_{fa}:=\frac{|{\mathcal{L}}_S|}{|\bar{{\mathcal{L}}}_S^c|}f  
\end{align}
Next, using the l.t.p. it holds for $p_g{(\mathcal{L})}$ \eqref{eq:pgrsampfin} that 
\begin{align}\label{eq:ltprob}
    1&=|\bar{\mathcal{L}}_S|d+ |\bar{\mathcal{L}}_S^c|f\\
    d&=\frac{1-|\bar{\mathcal{L}}_S^c|f}{|\bar{\mathcal{L}}_S|}
\end{align}
Furthermore by the definition of $\mathbf{P}_G$, it follows that 
\begin{align}
    \mathbf{P}_G=&\mathbb{E}_{\mathcal{L}\sim p_G}[ f(\mathcal{L})]\\
    =&\sum_{\mathcal{L}\in \mathcal{L}_S}p_g(\mathcal{L})f(\mathcal{L})\\\label{eq:expand}
    =&\sum_{\mathcal{L}\in \bar{\mathcal{L}}_S} f(\mathcal{L})d+
    \sum_{\mathcal{L}'\in \bar{\mathcal{L}}^c_S} f(\mathcal{L}')f\\\label{eq:expapl}
    =&d|\bar{\mathcal{L}}_S|\mathbf{P}_N+
    f|\bar{\mathcal{L}}^c_S|\mathbf{P}_A
\end{align}
where \eqref{eq:expand}  holds since  $\bar{\mathcal{L}}_S$ and $\bar{\mathcal{L}}^c_S$ are disjoint subsets,  \eqref{eq:expapl} follows by definition of $\mathbf{P}_N={1}/{|\bar{\mathcal{L}}_S|}\sum_{\mathcal{L}\in \bar{\mathcal{L}}_S} f(\mathcal{L})$ and 
$\mathbf{P}_A={1}/{|\bar{\mathcal{L}}^c_S|}\sum_{\mathcal{L}'\in \bar{\mathcal{L}}^c_S} f(\mathcal{L}')$. Using \eqref{eq:ltprob} and \eqref{eq:fa_def} it holds that
\begin{align}
\label{eq:partrespfa}
     \mathbf{P}_G=(1-\frac{|\bar{\mathcal{L}}_S^c|^2}{|{\mathcal{L}}_S|}p_{fa})
     \mathbf{P}_N+
 \frac{|\bar{\mathcal{L}}_S^c|^2}{|{\mathcal{L}}_S|}p_{fa}\mathbf{P}_A.
\end{align}
Hence, $\mathbf{P}_G$ is a convex combination between the nominal label distribution $\mathbf{P}_N$ and the one affected by the anomalies $\mathbf{P}_A$. By using \eqref{eq:partrespfa} the result of Theorem 1 follows
\begin{align}
  \|\mathbf{P}_G- \mathbf{P}_{N}\|_1
    =
\frac{|\bar{\mathcal{L}}_S^c|^2}{|{\mathcal{L}}_S|}p_{fa}\|\mathbf{P}_A -\mathbf{P}_{N}\|_1.
\end{align}
For the cardinality of the involved sets of subsets counting theory will be used. Specifically, it follows
\begin{align}
    |\bar{\mathcal{L}}_S|={N-K \choose S}
\end{align} since this is the number of ways to choose $S$ nodes out of $N-K$. Similarly, by counting the subsets with size $S$ constructed out of $N$ nodes it holds that
\begin{align}
    |{\mathcal{L}}_S|={N\choose S}.
\end{align} 
Finally, since the subsets $\bar{\mathcal{L}}^c_S$ and $\bar{\mathcal{L}}_S$ are not overlapping it holds that
$|\bar{\mathcal{L}}^c_S|=|{\mathcal{L}}_S|
-|\bar{\mathcal{L}}_S|$ and \begin{align}
|\bar{\mathcal{L}}^c_S|={N\choose S} -{N-K\choose S}.
\end{align}

\section{Proof of Theorem 2}
In this section $\|\mathbf{P}\|_1$ represents the norm 1 of matrix $\mathbf{P}$, and not the sum of absolute values of the elements of $\mathbf{P}$. Notice that since each draw of nodes $\mathcal{L}^{(i)}$ is independent, $\mathbf{P}_{G}^{(i)}$ is also independent across $i$. For each $\mathbf{P}_{G}^{(i)}$ it further holds
\begin{align}
\|\mathbf{P}_{G}^{(i)}\|\le
\sqrt{\|\mathbf{P}_{G}^{(i)}\|_1
\|\mathbf{P}_{G}^{(i)}\|_\infty}\le{\sqrt{N}}
\end{align}
where the last inequality follows since $[{\mathbf{P}_{G}^{(i)}}]_{n,c}\in[0,1], \forall n,c$ and $\sum_{c=1}^C[{\mathbf{P}_{G}^{(i)}}]_{n,c}=1$, $\forall n$ that implies $\|\mathbf{P}_{G}^{(i)}\|_1\le1$ 
and $\|\mathbf{P}_{G}^{(i)}\|_\infty\le N$. Next, the second moment of the  
matrix $\mathbf{P}_{G}^{(i)}$ is also bounded
\begin{align}
m_2(\mathbf{P}_{G}^{(i)})=max\big[\|\mathbb{E}\{
        \mathbf{P}_{G}^{(i)}{\mathbf{P}_{G}^{(i)}}^{\top}
        \}\|,\|\mathbb{E}\{{\mathbf{P}_{G}^{(i)}}^{\top}{\mathbf{P}_{G}^{(i)}}\}\|\big].
\end{align}
Let us define $\mathbf M= \mathbb{E}\{{\mathbf{P}_{G}^{(i)}}{\mathbf{P}_{G}^{(i)}}^{\top}\}$. For the spectral norm it holds that
\begin{align}
\|\mathbf{M}\|\le\sqrt{\|\mathbf{M}\|_\infty\|\mathbf{M}\|_1}\le N
\end{align}
where the last inequality holds since $M_{(n,n')}=\mathbb{E}\{\sum_{k=1}^{C}[{\mathbf{P}_{G}^{(i)}}]_{n,c}[{\mathbf{P}_{G}^{(i)}}]_{n',c}\}\le 1$ since $[{\mathbf{P}_{G}^{(i)}}]_{n,c}\in[0,1], \forall n,c$ and $\sum_{c=1}^C[{\mathbf{P}_{G}^{(i)}}]_{n,c}=1$, $\forall n$, and thus $\|\mathbf{M}\|_\infty\le N$ and $\|\mathbf{M}\|_1\le N$. Similarly, it can be shown that $\|\mathbb{E}\{\hat{\mathbf{P}}_{G}^{\top}\hat{\mathbf{P}}_{G}\}\|\le N$. Furthermore, the matrix variance is bounded as
\begin{align}
        v(\hat{\mathbf{P}}_{G})\le N
\end{align}
Hence, by appealing to the matrix Bernstein inequality for the sum of uncentered random matrices \cite[Ch. 6]{tropp2015introduction}
\eqref{eq:normbound} and \eqref{eq:propbound} in Theorem 2 follow.
\section{Proof of Corollary 1}
For the following consider that $\mathbf{Y}_{\mathcal{L}}$ is an $N\times C$ matrix whose $n$-th row is $\mathbf{y}_n^\transpose$ if $n\in\mathcal{L}$ and is $\mathbf{0}_{C}^\transpose$ otherwise.
First, for diffusion-based SSL models it holds that
\begin{align}
    \|\mathbf{P}_N
- \mathbf{P}_{A} \|_1 = \|h(\mathbf{A})(\frac{1}{|\bar{\mathcal{L}}_S|}\sum_{\mathcal{L}\in \bar{\mathcal{L}}_S}\mathbf{Y}_{\mathcal{L}}
-
\frac{1}{|\bar{\mathcal{L}}^c_S|}\sum_{\mathcal{L}'\in \bar{\mathcal{L}}^c_S}\mathbf{Y}_{\mathcal{L}'}) \|_1\label{eq:dif_th}
\end{align}
If $n\in \mathcal{N}$ is  nominal node, where $\mathcal{N}=\mathcal{V}-\mathcal{A}$, then $n$ is contained in ${N-K-1 \choose S-1}$ subsets of $\bar{\mathcal{L}}_S$ since $n$ may participate in a subset of size $S-1$ with any of the $N-K-1$ remaining nominal nodes. Hence,  each row $n$ of $\mathbf{Y}$ in the first sum of  \eqref{eq:dif_th}  is added ${N-K-1 \choose S-1}$ times if $n\in \mathcal{N}$  and 0 if $n\in\mathcal{A}$.   
Hence, it follows  
\begin{align}
    \frac{1}{|\bar{\mathcal{L}}_S|}\sum_{\mathcal{L}\in \bar{\mathcal{L}}_S}\mathbf{Y}_{\mathcal{L}}&=\frac{{N-K-1 \choose S-1}}{{N-K \choose S}}\mathbf{Y}_{\mathcal{N}}\\
    \label{eq:simpl_nom}
    &=\frac{S}{N-K}\mathbf{Y}_{\mathcal{N}}
\end{align}
For an anomalous node, $n\in \mathcal{A}$ is contained in $f_{\mathcal{A}}:={N-1 \choose S-1}$ subsets of $\bar{\mathcal{L}}_S^c$ since $n$ may participate in a subset of size $S-1$ with any of the $N-1$ remaining nodes. On the contrary, for a nominal node  $n\in \mathcal{N}$, the number of times $f_\mathcal{N}$, $n$ appears in subsets of $\bar{\mathcal{L}}_S^c$
can not be computed with straightforward counting theory, since any subset containing $n$ must contain at least one anomalous node. However, the total number of nodes in all subsets of $\bar{\mathcal{L}}^c_S$ is expressed as
\begin{align}\label{eq:total_nodes}
    S|\bar{\mathcal{L}}^c_S|=Kf_{\mathcal{A}} +(N-K)f_{\mathcal{N}}
\end{align}
where the left side  of \eqref{eq:total_nodes} follows since there are $|\bar{\mathcal{L}}^c_S|$ subsets each containing $S$ nodes and the right side of \eqref{eq:total_nodes} holds since there are $K (N-K)$ anomalous (nominal) nodes each included in $f_{\mathcal{A}} (f_{\mathcal{N}})$ subsets of $\bar{\mathcal{L}}^c_S$. Hence, 
it holds that
\begin{align}\label{eq:freq_nom_nodes}
f_{\mathcal{N}}=\frac{S}{N-K}|\bar{\mathcal{L}}^c_S|-
    \frac{K}{N-K}f_{\mathcal{A}}.
\end{align}
Hence, the $n$-th row of $\mathbf{Y}$ in  the second sum of  \eqref{eq:dif_th} is added $f_{\mathcal{A}}$ times if $n\in \mathcal{A}$ and $f_{\mathcal{N}}$ times if $n\in \mathcal{N}$. Thus, it follows 
\begin{align}
\label{eq:simpl_anom}
    \frac{1}{|\bar{\mathcal{L}}^c_S|}\sum_{\mathcal{L}'\in \bar{\mathcal{L}}^c_S}\mathbf{Y}_{\mathcal{L}'}=
    \frac{f_{\mathcal{N}}}{|\bar{\mathcal{L}}^c_S|}\mathbf{Y}_{\mathcal{N}}
    +
    \frac{f_{\mathcal{A}}}{|\bar{\mathcal{L}}^c_S|}
    \mathbf{Y}_{\mathcal{A}}
\end{align}
By utilizing \eqref{eq:simpl_nom} and \eqref{eq:simpl_anom} it holds that
\begin{align}
    \frac{1}{|\bar{\mathcal{L}}_S|}\sum_{\mathcal{L}\in \bar{\mathcal{L}}_S}\mathbf{Y}_{\mathcal{L}}
-
\frac{1}{|\bar{\mathcal{L}}^c_S|}\sum_{\mathcal{L}'\in \bar{\mathcal{L}}^c_S}\mathbf{Y}_{\mathcal{L}'}&=
\frac{S}{N-K}\mathbf{Y}_{\mathcal{N}}- \frac{f_{\mathcal{N}}}{|\bar{\mathcal{L}}^c_S|}\mathbf{Y}_{\mathcal{N}}-
    \frac{f_{\mathcal{A}}}{|\bar{\mathcal{L}}^c_S|}
    \mathbf{Y}_{\mathcal{A}}\\
    &=\frac{K f_{\mathcal{A}}}{|\bar{\mathcal{L}}^c_S|(N-K)}\mathbf{Y}_{\mathcal{N}}-\frac{f_{\mathcal{A}}}{|\bar{\mathcal{L}}^c_S|}
    \mathbf{Y}_{\mathcal{A}}\\
    &=\frac{f_{\mathcal{A}}}{|\bar{\mathcal{L}}^c_S|}
    \bigg(\frac{K}{N-K}\mathbf{Y}_{\mathcal{N}} -\mathbf{Y}_{\mathcal{A}}\bigg)
\end{align}
Hence, for diffusion-based SSL models it holds that 
\begin{align}
\label{eq:partres}
     \|\mathbf{P}_N
- \mathbf{P}_{A} \|_1=
\|
\frac{f_{\mathcal{A}}}{|\bar{\mathcal{L}}^c_S|}h(\mathbf{A})
    \bigg(\frac{K}{N-K}\mathbf{Y}_{\mathcal{N}} -\mathbf{Y}_{\mathcal{A}}\bigg) \|_1
\end{align}
Furthermore, using canonical vectors $\mathbf{Y}_{\mathcal{N}}$ is written as
\begin{align}
    \mathbf{Y}_{\mathcal{N}}=\sum_{n\in\mathcal{N}}
    \mathbf{e}_n\mathbf{y}_n^\transpose
\end{align}
where $\mathbf{e}_n$ is a canonical vector with 1 at the $n$-th position and otherwise 0. Hence, if $\mathbf{h}_n$ is the $n$-th column of $h(\mathbf{A})$ it follows from \eqref{eq:partres} that
\begin{align}
    \label{eq:partres1}
     \|\mathbf{P}_N
- \mathbf{P}_{A} \|_1=
\frac{f_{\mathcal{A}}}{|\bar{\mathcal{L}}^c_S|}
\|
\frac{K}{N-K}\sum_{n\in\mathcal{N}}
    \mathbf{h}_n\mathbf{y}_n^\transpose -\sum_{n\in\mathcal{A}}
    \mathbf{h}_n\mathbf{y}_n^\transpose \|_1.
\end{align}

\section{Alternative analytical guarantees provided by  GraphSAC}
This section provides alternative guarantees 
that directly account for $\delta_f(\cdot)$ in the analysis. 
The ultimate goal of GraphSAC is to obtain a class distribution per node that is not affected by the anomalous nodes but takes into account only the nominal nodes.
Hence, the desired probability matrix is 
\begin{align}
    \mathbf{P}_{N}&:=\mathbb{E}_{\bar{\mathcal{L}}}[f(\bar{\mathcal{L}})] \\
    \shortintertext{where $\bar{\mathcal{L}}$ are drawn uniformly $\bar{\mathcal{L}}\sim \mathrm{Unif}(\bar{\mathcal{L}}_S)$ from the set of ``clean'' $S-$size nodal subsets}
        \bar{\mathcal{L}}_S&:=\{ \mathcal{L}\subseteq \mathcal{V} : |\mathcal{L}| = S, \mathcal{L}\cap \mathcal{A} = \emptyset \}
\end{align}
Indeed, $ \mathbf{P}_{N}$ captures the nominal class distribution per node that is not affected by the anomalous nodes and conforms to the properties promoted by the SSL model.
Therefore, the largest entries in the anomaly score vector $\bm{\phi}(\mathbf{P}_{N})$ represent the anomalous nodes .

Unfortunately, direct estimation of  $\mathbf{P}_{N}$ is not feasible since  $\mathcal{A}$ is unknown. 
Further, note that if one directly accounts for all nodes will eventually include anomalous nodes that will bias the learned probability matrix.
On the other hand, GraphSAC builds incrementally an expected probability distribution using only samples that are minimally affected by anomalies.

GraphSAC decides if a sample is contaminated with anomalies by evaluating the accuracy of the instantiated SSL model~\eqref{eq:graphsacdetect}. 
If $\delta_f(\mathcal{L}^{(i)})=1$ then  $\hat{\mathbf{P}}^{(i)}_{G}$ is considered, and otherwise not. 
Ultimately, GraphSAC targets at the following expected probability matrix
\begin{align}
    \label{eq:pgraphsac2}
        \mathbf{P}_{G}:= \frac{\mathbb{E}_{\mathcal{L}}[ f(\mathcal{L})\delta_f(\mathcal{L})]}{\mathbb{E}_\mathcal{L}[\delta_f(\mathcal{L})]} 
\end{align}
where $\mathcal{L}$ is drawn uniformly from $\mathcal{L}_S$ \eqref{eq:samplgrsac}. If GraphSAC discards all the contaminated samples and maintains all the clean ones, then indeed ${\mathbb{E}_\mathcal{L}[\delta_f(\mathcal{L})]}=|\bar{\mathcal{L}}_S|/|{\mathcal{L}}_S|$ and $ \mathbb{E}_{\mathcal{L}}[ f(\mathcal{L})\delta_f(\mathcal{L})]= \sum_{\mathcal{L}\in
\bar{\mathcal{L}}_S}f(\mathcal{L})/|{\mathcal{L}}_S|$ that amounts to $
\mathbf{P}_G=\mathbf{P}_N$.

Hence, the distance among $\mathbf{P}_{G}$  and the desired $\mathbf{P}_{N}$ characterizes the performance of GraphSAC.  
Next, a couple of assumptions follow that enable theoretical claims stronger than Theorem 1.\footnote{Both assumption have been verified to hold empirically in simulation (See Fig.~\ref{fig:2d1})} \\
\noindent\textbf{1}) GraphSAC will identify clean samples 
    \begin{align}
    \label{eq:as1}
     \delta_f(\mathcal{L})=1~~~\text{if}~~~\mathcal{L}\in \bar{\mathcal{L}}_S
    \end{align}
\noindent\textbf{2}) GraphSAC will identify contaminated samples if they contain at least $K_m+1$ anomalies
    \begin{align}
    \label{eq:as2}
        \delta_f(\mathcal{L})=0~~~\text{if}~~~|\mathcal{L}\cap\mathcal{A}|\ge K_m+1
    \end{align}

The first assumption declares that sets with no anomalies will be labeled correctly, i.e. 
$p(\delta_f(\mathcal{L})=1|\mathcal{L}\in \bar{\mathcal{L}}_S)=1$. This can be satisfied by the required accuracy $T$ of Algorithm 1 low enough.

The second assumption asserts that GraphSAC will filter out a contaminated set if enough anomalies are in the sample. This is not surprising since a large number of anomalies in $\mathcal{L}$ will result to a small  consensus set $|\mathcal{U}^*|$ and $\mathcal{L}$ will be discarded. Thus, the probability of false alarm is
\begin{align}p_{fa}:=& p(\delta_f(\mathcal{L})=1|\mathcal{L}\notin \bar{\mathcal{L}}_S)\nonumber
\\
=& p(\delta_f(\mathcal{L})=1|\mathcal{L}\in {\mathcal{L}}_S^{K_m})
\end{align}
where ${\mathcal{L}}_S^{K_m}$ is the set of subsets containing at least one and at most $K_m$ anomalies, i.e. ${\mathcal{L}}_S^{K_m}:=\{ \mathcal{L}\subseteq \mathcal{V} : |\mathcal{L}| = S, 1\le|\mathcal{L}\cap \mathcal{A}| \le K_m\}$. 
\begin{theorem} Let $\mathbf{P}_A:=\mathbb{E}_{\mathcal{L}}\big[ f(\mathcal{L})\big|\delta_f(\mathcal{L})=1,\mathcal{L}\in
    \mathcal{L}_S^{K_m}\big]$ be the expected distribution obtained from $f(\cdot)$ given that at most $K_m$ anomalies are sampled and the filter declares the sample as clean,  $p_{\mathcal{L}_S^{K_m}}:=p(\mathcal{L} \in \mathcal{L}_S^{K_m})$,  and $p_{\bar{\mathcal{L}}_S}:=p(\mathcal{L}\in\bar{\mathcal{L}}_S)$.
The total variation distance between $\mathbf{P}_{G}$ and $\mathbf{P}_{N}$  is bounded as
\begin{align}
    \text{TV}(\mathbf{P}_{G},\mathbf{P}_{N})=\|\mathbf{P}_G- \mathbf{P}_{N}\|_1
    =
\frac{p_{fa}}{p_{\bar{\mathcal{L}}_S}+
p_{fa}
(1-p_{\bar{\mathcal{L}}_S})}\|\mathbf{P}_A 
    p_{\mathcal{L}_S^{K_m}}
    -\mathbf{P}_{N}(1-p_{\bar{\mathcal{L}}_S})\|_1.
\end{align}
\end{theorem}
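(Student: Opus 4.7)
The plan is to evaluate $\mathbf{P}_G$ in closed form by partitioning the sample space $\mathcal{L}_S$ according to anomaly count, using assumptions \eqref{eq:as1}--\eqref{eq:as2} to pin down $\delta_f(\cdot)$ on each piece, and then computing $\mathbf{P}_G-\mathbf{P}_N$ directly before taking the element-wise $\ell_1$ norm. Since the filter $\delta_f$ is now retained explicitly (rather than absorbed into a biased sampler as in Theorem~1), the decisive simplification will come from the two structural assumptions that declare $\delta_f$ to be deterministic on the two extreme strata.

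First I would start from the defining ratio \eqref{eq:pgraphsac2}, namely $\mathbf{P}_G=\mathbb{E}_\mathcal{L}[f(\mathcal{L})\delta_f(\mathcal{L})]/\mathbb{E}_\mathcal{L}[\delta_f(\mathcal{L})]$, and use the disjoint decomposition $\mathcal{L}_S=\bar{\mathcal{L}}_S\sqcup\mathcal{L}_S^{K_m}\sqcup\mathcal{L}_S^{>K_m}$, where $\mathcal{L}_S^{>K_m}$ denotes subsets with strictly more than $K_m$ anomalies. Assumption \eqref{eq:as1} forces $\delta_f\equiv 1$ on $\bar{\mathcal{L}}_S$, contributing $p_{\bar{\mathcal{L}}_S}\mathbf{P}_N$ to the numerator and $p_{\bar{\mathcal{L}}_S}$ to the denominator; assumption \eqref{eq:as2} forces $\delta_f\equiv 0$ on $\mathcal{L}_S^{>K_m}$, so those terms vanish entirely.

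Next, on the residual stratum $\mathcal{L}_S^{K_m}$ I would invoke conditional expectations: by the definition of $p_{fa}$ and of $\mathbf{P}_A$, the relevant partial sums are $\sum_{\mathcal{L}\in\mathcal{L}_S^{K_m}}p(\mathcal{L})\delta_f(\mathcal{L})=p_{fa}\,p_{\mathcal{L}_S^{K_m}}$ and $\sum_{\mathcal{L}\in\mathcal{L}_S^{K_m}}p(\mathcal{L})f(\mathcal{L})\delta_f(\mathcal{L})=p_{fa}\,p_{\mathcal{L}_S^{K_m}}\mathbf{P}_A$. Assembling the pieces gives
\begin{align*}
\mathbf{P}_G=\frac{p_{\bar{\mathcal{L}}_S}\mathbf{P}_N+p_{fa}\,p_{\mathcal{L}_S^{K_m}}\mathbf{P}_A}{p_{\bar{\mathcal{L}}_S}+p_{fa}\,p_{\mathcal{L}_S^{K_m}}}.
\end{align*}
Subtracting $\mathbf{P}_N$, clearing the common denominator, and taking the element-wise $\ell_1$ norm then produces the stated identity, since $p_{fa}\ge 0$ and the denominator is a positive scalar that can be pulled out of the norm.

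The main obstacle I foresee is bookkeeping the denominator and the inner coefficients to match the target expression, where $(1-p_{\bar{\mathcal{L}}_S})$ appears alongside $p_{\mathcal{L}_S^{K_m}}$. The two coincide precisely when $p_{\mathcal{L}_S^{>K_m}}=0$, which is implicit in the operational definition of $K_m$ as the largest anomaly count observed in any contaminated sample that the filter fails to reject (as in Fig.~\ref{fig:maxval}); any contaminated subset with strictly more than $K_m$ anomalies is filtered out by \eqref{eq:as2} and thus contributes zero mass to both numerator and denominator. Once this equivalence is recorded, the algebra collapses cleanly and the factor $p_{fa}/(p_{\bar{\mathcal{L}}_S}+p_{fa}(1-p_{\bar{\mathcal{L}}_S}))$ emerges outside the norm, while $p_{\mathcal{L}_S^{K_m}}\mathbf{P}_A-(1-p_{\bar{\mathcal{L}}_S})\mathbf{P}_N$ remains inside, yielding the claim.
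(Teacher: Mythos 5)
Your proposal follows essentially the same route as the paper's own proof: decompose $\mathbb{E}_{\mathcal{L}}[f(\mathcal{L})\delta_f(\mathcal{L})]$ and $\mathbb{E}_{\mathcal{L}}[\delta_f(\mathcal{L})]$ by the law of total probability, use assumption \eqref{eq:as1} to reduce the clean stratum to $p_{\bar{\mathcal{L}}_S}\mathbf{P}_N$, use assumption \eqref{eq:as2} to kill the stratum with more than $K_m$ anomalies, and condition on $\delta_f(\mathcal{L})=1$ over $\mathcal{L}_S^{K_m}$ to produce $p_{fa}\,p_{\mathcal{L}_S^{K_m}}\mathbf{P}_A$; the closed form you obtain for $\mathbf{P}_G$ as a convex combination is exactly the paper's \eqref{eq:partaprox} divided by $p_\delta$. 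The one step I would push back on is your final reconciliation: you justify $p_{\mathcal{L}_S^{K_m}}=1-p_{\bar{\mathcal{L}}_S}$ by arguing that subsets with more than $K_m$ anomalies ``contribute zero mass,'' but zero contribution to the filtered averages (because $\delta_f=0$ there) is not the same as zero sampling probability under \eqref{eq:samplgrsac}; such subsets generically exist and are sampled, so $p_{\mathcal{L}_S^{>K_m}}>0$ in general. Your own self-consistent computation actually gives denominator $p_{\bar{\mathcal{L}}_S}+p_{fa}\,p_{\mathcal{L}_S^{K_m}}$ with $p_{fa}:=p(\delta_f(\mathcal{L})=1\,|\,\mathcal{L}\in\mathcal{L}_S^{K_m})$, and matching this to the stated $p_{\bar{\mathcal{L}}_S}+p_{fa}(1-p_{\bar{\mathcal{L}}_S})$ requires silently switching $p_{fa}$ to the coarser conditional $p(\delta_f(\mathcal{L})=1\,|\,\mathcal{L}\notin\bar{\mathcal{L}}_S)$ -- an elision the paper itself commits when it equates the two conditionals in its definition of $p_{fa}$. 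So the substance of your argument is sound and identical in structure to the paper's; only the bookkeeping that forces the answer into the theorem's exact typographical form rests on an identification that neither you nor the paper actually establishes.
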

\begin{proof}
First, by the law of total probability,  $\mathbb{E}_{\mathcal{L}}[ f(\mathcal{L})\delta_f(\mathcal{L})]$ is written as
\begin{align}\label{eq:decompltp}
    \mathbb{E}_{\mathcal{L}}\big[ f(\mathcal{L})\delta_f(\mathcal{L})\big]=
    \mathbb{E}_{\mathcal{L}}\big[ f(\mathcal{L})\delta_f(\mathcal{L})\big|\mathcal{L}\in
    \bar{\mathcal{L}}_S\big]p_{\bar{\mathcal{L}}_S}
    +
    \mathbb{E}_{\mathcal{L}}\big[ f(\mathcal{L})\delta_f(\mathcal{L})\big|\mathcal{L}\notin
    \bar{\mathcal{L}}_S\big](1-p_{\bar{\mathcal{L}}_S})
\end{align}
where $p_{\bar{\mathcal{L}}_S}:=p(\mathcal{L}\in\bar{\mathcal{L}}_S)$. Next, by applying \eqref{eq:as1} it holds that 
\begin{align}
\label{eq:cfirstas}
    \mathbb{E}_{\mathcal{L}}\big[ f(\mathcal{L})\delta_f(\mathcal{L})\big|\mathcal{L}\in
    \bar{\mathcal{L}}_S\big]=\mathbb{E}_{\bar{\mathcal{L}}}(f(\bar{\mathcal{L}}))=\mathbf{P}_{N}.
\end{align}
Furthermore, by using \eqref{eq:as2} it follows that 
\begin{align}
\label{eq:csecondas}
    \mathbb{E}_{\mathcal{L}}\big[ f(\mathcal{L})\delta_f(\mathcal{L})\big|\mathcal{L}\notin
    \bar{\mathcal{L}}_S\big]=\mathbb{E}_{\mathcal{L}}\big[ f(\mathcal{L})\delta_f(\mathcal{L})\big|\mathcal{L}\in
    \mathcal{L}_S^{K_m}\big] p(\mathcal{L}\in \mathcal{L}_S^{K_m}\big|\mathcal{L}\notin\bar{\mathcal{L}}_S)
\end{align}
The equality in \eqref{eq:csecondas} implies that GraphSAC always filters out samples corrupted with at least $K_m+1$ adversaries.  Hence, by using \eqref{eq:cfirstas} and \eqref{eq:csecondas}, it follows from \eqref{eq:decompltp} that 
\begin{align}
\label{eq:partaprox}
\mathbb{E}_{\mathcal{L}}\big[ f(\mathcal{L})\delta_f(\mathcal{L})\big]=
   \mathbf{P}_{N}p_{\bar{\mathcal{L}}_S}
    +
    \mathbb{E}_{\mathcal{L}}\big[ f(\mathcal{L})\delta_f(\mathcal{L})\big|\mathcal{L}\in
    \mathcal{L}_S^{K_m}\big] p_{\mathcal{L}_S^{K_m}}
\end{align}
where $p_{ \mathcal{L}_S^{K_m}}=p(\mathcal{L}\in \mathcal{L}_S^{K_m}\big|\mathcal{L}\notin\bar{\mathcal{L}}_S)(1-p_{\bar{\mathcal{L}}_S})=p(\mathcal{L}\in\mathcal{L}_S^{K_m})$. Furthermore, since $\mathbf{P}_G=\mathbb{E}_{\mathcal{L}}\big[ f(\mathcal{L})\delta_f(\mathcal{L})\big]/
\mathbb{E}_\mathcal{L}[\delta_f(\mathcal{L})]$ with
$p_\delta:=
\mathbb{E}_\mathcal{L}[\delta_f(\mathcal{L})]$ it holds that
\begin{align}
\|\mathbf{P}_G- \mathbf{P}_{N}\|_1&= \|\frac{1}{p_\delta} \big(\mathbf{P}_{N}p_{\bar{\mathcal{L}}_S}
    +
    \mathbb{E}_{\mathcal{L}}\big[ f(\mathcal{L})\delta_f(\mathcal{L})\big|\mathcal{L}\in
    \mathcal{L}_S^{K_m}\big] p_{\mathcal{L}_S^{K_m}}\big)- \mathbf{P}_{N}\|_1\\
    &=\|\mathbb{E}_{\mathcal{L}}\big[ f(\mathcal{L})\delta_f(\mathcal{L})\big|\mathcal{L}\in
    \mathcal{L}_S^{K_m}\big]
    \frac{p_{\mathcal{L}_S^{K_m}}}{p_\delta} 
    -\mathbf{P}_{N}\frac{p_{\delta}-p_{\bar{\mathcal{L}}_S}}{p_\delta} \|_1
\end{align}
Furthermore, using the law of total probability it holds that
\begin{align}
\mathbb{E}_{\mathcal{L}}\big[ f(\mathcal{L})\delta_f(\mathcal{L})\big|\mathcal{L}\in\mathcal{L}_S^{K_m}\big]&=
\mathbb{E}_{\mathcal{L}}\big[ f(\mathcal{L})\big|\delta_f(\mathcal{L})=1,\mathcal{L}\in
    \mathcal{L}_S^{K_m}\big]p(\delta_f(\mathcal{L})=1|\mathcal{L}\in \mathcal{L}^{K_m}_S)\\
    &=\mathbf{P}_A p_{fa}
\end{align}
where $p_{fa}=p(\delta_f(\mathcal{L})=1|\mathcal{L}\in \mathcal{L}^{K_m}_S)$ and $\mathbf{P}_A=\mathbb{E}_{\mathcal{L}}\big[ f(\mathcal{L})\big|\delta_f(\mathcal{L})=1,\mathcal{L}\in
    \mathcal{L}_S^{K_m}\big]$.
Following a similar argument it holds for $p_\delta$ that
\begin{align}
p_\delta=&
p(\delta_f(\mathcal{L})=1|\mathcal{L}\in \bar{\mathcal{L}}_S)p_{\bar{\mathcal{L}}_S}+
p(\delta_f(\mathcal{L})=1|\mathcal{L}\notin \bar{\mathcal{L}}_S)(1-p_{\bar{\mathcal{L}}_S})\label{eq:ltotprob}\\
=&p_{\bar{\mathcal{L}}_S}+
p(\delta_f(\mathcal{L})=1|\mathcal{L}\in \mathcal{L}^{K_m}_S)
(1-p_{\bar{\mathcal{L}}_S})\label{eq:assapplied}
\\=&p_{\bar{\mathcal{L}}_S}+
p_{fa}
(1-p_{\bar{\mathcal{L}}_S})
\end{align}
where \eqref{eq:ltotprob} follows from the law of total probability and \eqref{eq:assapplied} is a result of applying \eqref{eq:as1} and \eqref{eq:as2}. Hence, the result in Theorem 3 is established  as follows 
\begin{align}
\|\mathbf{P}_G- \mathbf{P}_{N}\|_1&= \|\mathbf{P}_A 
    \frac{p_{fa}p_{\mathcal{L}_S^{K_m}}}{p_{\bar{\mathcal{L}}_S}+
p_{fa}
(1-p_{\bar{\mathcal{L}}_S})} 
    -\mathbf{P}_{N}\frac{p_{fa}(1-p_{\bar{\mathcal{L}}_S}))}{p_{\bar{\mathcal{L}}_S}+
p_{fa}
(1-p_{\bar{\mathcal{L}}_S})} \|_1\\&=
\frac{p_{fa}}{p_{\bar{\mathcal{L}}_S}+
p_{fa}
(1-p_{\bar{\mathcal{L}}_S})}\|\mathbf{P}_A 
    p_{\mathcal{L}_S^{K_m}}
    -\mathbf{P}_{N}(1-p_{\bar{\mathcal{L}}_S})\|_1
\end{align}
\end{proof}
Theorem  3  states that the distance between the desired and the pursed distribution is  bounded by the scaled distance between the nominal distribution and the one affected by the anomalies. Different than Theorem 1 in the main document, Theorem 3, utilizes two additional (experimentally verified) assumptions, and provides a finer representation of $\|\mathbf{P}_G- \mathbf{P}_{N}\|_1$. 

\section{Runtime comparisons}
This section reports the runtime performance of the algorithms in Fig.~\ref{fig:labpert} in the main document. 
The scalabilty of GraphSAC is reflected on the runtime comparisons listed in Fig. \ref{fig:runtime}. Fig.~\ref{fig:runtime} reports the runtime per algorithm for $K=20$ in Fig.~\ref{fig:labpert}.  All experiments were run on a machine with i7-4790 @3.60 Ghz CPU, and 32GB of RAM. We used the Matlab and Python implementations provided by the authors of the compared algorithms. For fair comparison, the iterative version of Algorithm 1 is employed, even though the algorithm can be performed in parallel per $i$.

GraphSAC employs efficient SSL solvers,  and as expected is orders of magnitude faster than the competing anomaly detection approaches. 
The computational overhead associated with scoring each egonet~\cite{perozzi2016scalable}, subspace extraction~\cite{li2017radar} and deep neural networks~\cite{ding2019deep} limits the applicability of Amen~\cite{perozzi2016scalable}, Radar~\cite{li2017radar}, Gae~\cite{ding2019deep}, respectively to large-scale graphs.

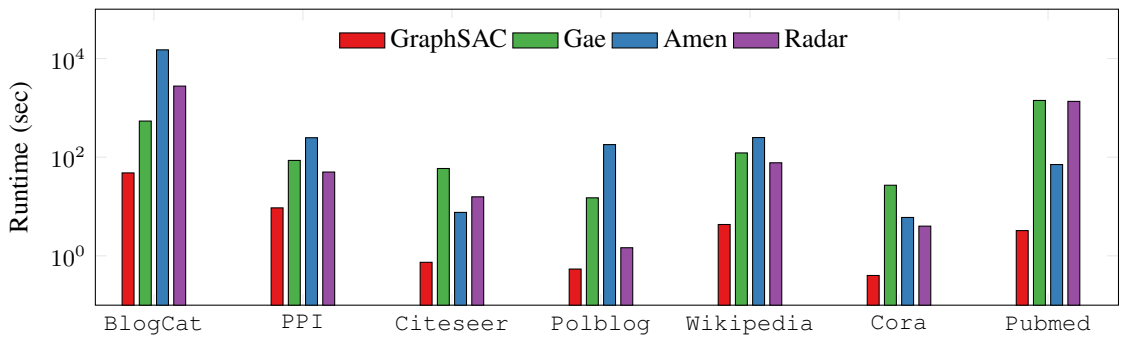
\begin{figure}[t]
%
%
%
\begin{tikzpicture}

\begin{axis}[%
width=2.5\mywidth,
height=0.987\myheight,
at={(0\mywidth,0\myheight)},
scale only axis,
bar shift auto,ticklabel style={font=\small},
log origin=infty,
xmin=0.509090909090909,
xmax=9.090909090909,
xtick={1,2.25,3.5,4.75,6,7.25,8.5},
xticklabels={{\texttt{BlogCat}},{\texttt{PPI}},{\texttt{Citeseer}},{\texttt{Polblog}}, {\texttt{Wikipedia}},{\texttt{Cora}},{\texttt{Pubmed}}},
ymode=log,
ymin=0.1,
ymax=100000,
yminorticks=true,
ylabel style={font=\color{white!15!black}},
ylabel={Runtime (sec) },
ylabel near ticks,
axis background/.style={fill=white},
 legend columns=4,
legend style={at={(0.23,0.82)}, anchor=south west, legend cell align=left, align=left, draw=none}
]

\addplot[ybar, bar width=0.1, fill=GraphSAC, draw=black, area legend] table[row sep=crcr] {%
	1	48\\
	2.25 9.39 \\
	3.5 0.74\\
	4.75 0.54 \\
	6 4.32\\
	7.25 	0.40\\
	8.5 	3.2497780323028564 \\
};
\addlegendentry{GraphSAC}

\addplot[ybar, bar width=0.1, fill=Gae, draw=black, area legend] table[row sep=crcr] {%
	1	540\\
	2.25 86\\
	3.5 59\\
	4.75 15\\
	6 122\\
	7.25 27\\
	8.5 1412.8371877670288\\
};
\addlegendentry{Gae}

\addplot[ybar, bar width=0.1, fill=Amen, draw=black, area legend] table[row sep=crcr] {%
1	14993\\
2.25 248\\
3.5 7.6\\
4.75 180\\
6 250\\
7.25 6\\
8.5 71\\
};
\addlegendentry{Amen}
\addplot[ybar, bar width=0.1, fill=Radar, draw=black, area legend] table[row sep=crcr] {%
	1	2761\\
	2.25 50\\
	3.5 15.72\\
	4.75 1.46\\
	6 77\\
	7.25 4\\
	8.5 1353\\
};
\addlegendentry{Radar}

\end{axis}
\end{tikzpicture}%
	\caption{Runtime comparisons across different graphs for Fig.~\ref{fig:labpert}.}\label{fig:runtime}
\end{figure}
\section{Additional Experiments}

\begin{figure}
		    \centering
		    \includegraphics
		    [width =0.45\linewidth, height=0.16\linewidth]
		    {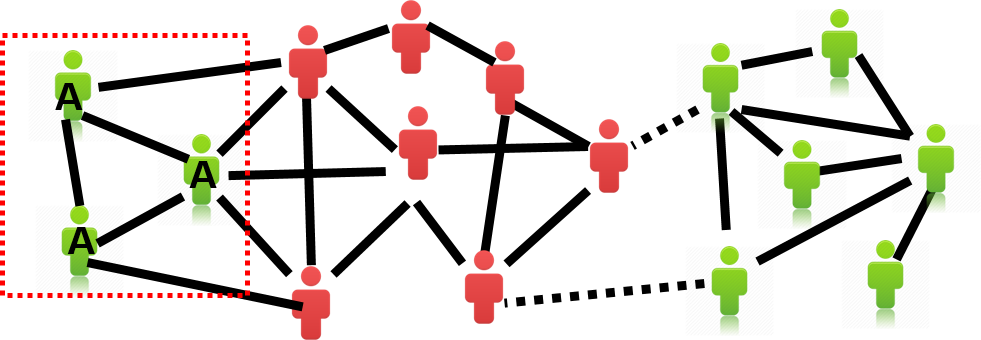}
		    \caption{Clustered anomalies.}
		    \label{fig:clustanom}
\end{figure}
This section provides additional experiments that characterize the performance of GraphSAC. Anomaly detection is tested under the \emph{structural} anomaly generation model based on random walks.  Furthermore, assumptions 1,2 of Theorem 3 are examined experimentally. Finally, the anomaly detection performance of GraphSAC is tested under varying the number of anomalies $K$ and the size of the sample set $S$. 

\textbf{Baselines}. See Section 4 in the main document. 

\textbf{Random walk-based structural anomalies}. First, a subset of $|\mathcal{A}|$ nodes is selected at random. For each $n \in \mathcal{A}$, a random walk of length $k$  is performed and $n$ is connected to the landing node. This process is repeated $I$ times per node $n$ that results to at most $I$ new connections to node $n$.

Table~\ref{tab:polblogstruct} reports the AUC values for detecting random walk-based structural anomalies with $I=5$.  The performance of the majority of the methods improves as  $k$ increases. This is expected since, for larger length of the random walks  $k$, the landing node will be increasingly dissimilar to the starting node and hence the structural anomaly will be easier identified.  For all values of $k$ GraphSAC outperforms the competing approaches.

\begin{table}[]
    \centering
    \caption{Discover anomalous nodes with $|\mathcal{A}|=20$ for the Polblogs dataset.} 
    \vspace{0.2cm}
    \rowcolors[]{1}{white}{gray}
    \begin{tabular}{p{3cm} c c c c}
    \toprule
Length of RW & $k=5$ & $k=10$ & $k=20$& $k=30$\\
    \midrule
    GraphSAC & \textbf{0.81} &\textbf{0.85}
&\textbf{0.83}&\textbf{0.91}
 \\
    
    Amen& 0.39 &0.42 &
0.55 & 0.60
\\
  
    Radar & 0.68&
0.71& 0.67&0.70\\
    
    Avg degree & 0.68&
0.67&0.53 &0.51
\\
    
Cut ratio & 0.49&
0.56&0.53&0.49
\\
Flake & 0.53&
0.49&0.55&0.49
\\
Conductance & 0.18&
0.18& 0.17 &0.16
\\
  
  	\bottomrule
    \end{tabular}
      \label{tab:polblogstruct}
\end{table}

\textbf{On the dependence of GraphSAC to $K$ and $S$}. Random walk-based label anomalies for the Citesser dataset are generated as in Section 4 of the main document. GraphSAC uses $I=2000$ iterations and the accuracy threshold is $T=0.6$. 

Fig. \ref{fig:auc2d1} reports the AUC performance of GraphSAC  for varying $K$ and $S$, where darker boxes indicate larger values. Observe that as $S$ increases the AUC increases as well. Although an increase on the number of anomalies $K$ decreases the performance in AUC, this can be rectified by an increase in the sampling size $S$. 

Fig. \ref{fig:pfa1} represents $p_c:=1-p_{fa}$ the percentage of contaminated subsets $\mathcal{L}^{(i)}$ that were discarded, which relates to assumption 1 of Theorem 3. Hence, large values of $p_c$ represent small $p_{fa}$ and are desirable. For small $S$, $p_c$ is large and as $S$ increases $p_c$ decreases. Further, note that a small $p_c$ (large $p_{fa}$) is not affecting the AUC performance.  Even though there are contaminated samples, the number anomalies in $\mathcal{L}$ is so small that does not affect the AUC values~Fig. \ref{fig:auc2d}.

Fig. \ref{fig:maxval1} shows the maximum number of anomalies $\max_i K^{(i)}$ that existed in the contaminated samples that were missclasiffied as nominal $\delta(f(\mathcal{L}^{(i)}))=1$ where $K^{(i)}:=|\mathcal{L}^{(i)}\cup\mathcal{A}|$. This number is equivalent to $K_m$ in assumption 2. Notice that assumption 2 indeed holds here since $K_m$ in Fig. \ref{fig:maxval1} is much less than the total number of anomalies that is $K=17,33,83,166,266$ respectively. 
\begin{figure}[htbp]
    \centering
    \begin{subfigure}[b]{0.49\textwidth}
        
        \includegraphics[width=0.5\textwidth]{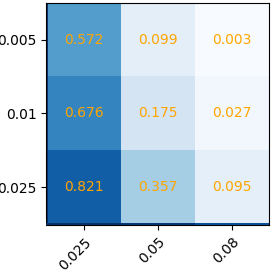}
        \caption{Percentage of contaminated subsets $\mathcal{L}^{(i)}$ discarded.}
        \label{fig:pfa1}
    \end{subfigure}
    \begin{subfigure}[b]{0.49\textwidth}
        \includegraphics[width=0.5\textwidth]{figs/fi_max_num_anomalexp10_score_val0.76_5_5anomal_rw_k=10_min_d=5_pct=0.08_citeseer.png}
        \caption{Maximum number of anomalies in $L^{(i)}$, where $\delta(f(L^{(i)}))=1$. or $K_m$}
        \label{fig:maxval1}
    \end{subfigure}
    \begin{subfigure}[b]{0.49\textwidth}
        \includegraphics[width=0.5\textwidth]{figs/fi_aucexp10_score_val0.76_5_5anomal_rw_k=10_min_d=5_pct=0.08_citeseer.png}
        \caption{AUC values for detecting the anomalous nodes.}
        \label{fig:auc2d1}
    \end{subfigure}
    \caption{The horizontal axis represents the fraction of samples $S/N$ and the vertical axis represents the fraction of anomalies $K/N$, where both are normalized by the number of nodes. }
    \label{fig:2d1}
\end{figure}
\end{document}